\documentclass[11pt]{article}
\usepackage{amsmath,amsfonts,amssymb}
\usepackage{authblk}
\usepackage{algorithmic}
\usepackage{algorithm}
\usepackage{array,soul}
\usepackage{booktabs}
\usepackage{textcomp}
\usepackage{stfloats,colortbl,xcolor,graphicx}
\usepackage{url,hyperref}
\usepackage{verbatim}
\usepackage{graphicx}
\usepackage{rsfso}
\usepackage{enumerate}
\usepackage{array}

\usepackage{lastpage}
\newcommand{\BlackBox}{\rule{1.5ex}{1.5ex}}  
\ifdefined\proof
    \renewenvironment{proof}{\par\noindent{\bf Proof\ }}{\hfill\BlackBox\\[2mm]}
\else
    \newenvironment{proof}{\par\noindent{\bf Proof\ }}{\hfill\BlackBox\\[2mm]}
\fi
 
\newtheorem{theorem}{Theorem}
\newtheorem{lemma}[theorem]{Lemma}


\begin{document}

\title{Fast Debiasing of the LASSO Estimator}

\author[1]{Shuvayan Banerjee\thanks{banerjeeshuvayan21@gmail.com}}
\author[2]{James Saunderson\thanks{James.Saunderson@monash.edu}}
\author[3]{Radhendushka Srivastava\thanks{rsrivastava@iitb.ac.in}}
\author[4]{Ajit Rajwade\thanks{ajitvr@cse.iitb.ac.in}}
\affil[1]{Department of Mathematics, IIT Bombay, and IIT Bombay Monash Research Academy}
\affil[2]{Department of Electrical and Computer Systems Engineering, Monash University}
\affil[3]{Department of Mathematics, IIT Bombay}
\affil[4]{Department of Computer Science and Engineering, IIT Bombay}

\maketitle

\begin{abstract}
    In high-dimensional sparse regression, the \textsc{Lasso} estimator offers excellent theoretical guarantees but is well-known to produce biased estimates. To address this, \cite{Javanmard2014} introduced a method to ``debias" the \textsc{Lasso} estimates for a random sub-Gaussian sensing matrix $\boldsymbol{A}$. Their approach relies on computing an ``approximate inverse" $\boldsymbol{M}$ of the matrix $\boldsymbol{A}^\top \boldsymbol{A}/n$ by solving a convex optimization problem. This matrix $\boldsymbol{M}$ plays a critical role in mitigating bias and allowing for construction of confidence intervals using the debiased \textsc{Lasso} estimates. However the computation of $\boldsymbol{M}$ is expensive in practice as it requires iterative optimization.
In the presented work, we re-parameterize the optimization problem to compute a ``debiasing matrix" $\boldsymbol{W} := \boldsymbol{AM}^{\top}$ directly, rather than the approximate inverse $\boldsymbol{M}$. This reformulation retains the theoretical guarantees of the debiased \textsc{Lasso} estimates, as they depend on the \emph{product} $\boldsymbol{AM}^{\top}$ rather than on $\boldsymbol{M}$ alone. Notably, we provide a simple, computationally efficient, closed-form solution for $\boldsymbol{W}$ under similar conditions for the sensing matrix $\boldsymbol{A}$ used in the original debiasing formulation, with an additional condition that the elements of every row of $\boldsymbol{A}$ have uncorrelated entries.
Also, the optimization problem based on $\boldsymbol{W}$ guarantees a unique optimal solution, unlike the original formulation based on $\boldsymbol{M}$. We verify our main result with numerical simulations.
\end{abstract}

\section{Introduction}
In high-dimensional sparse regression, where the number of predictors significantly exceeds the number of observations, the \textsc{Lasso} (Least Absolute Shrinkage and Selection Operator) is a widely used method for variable selection and estimation. By incorporating an $\ell_1$ regularization term, \textsc{Lasso} promotes sparsity in the estimated coefficients, enabling effective performance for sparse signal vectors even if the number of predictors far exceeds the number of samples. The \textsc{Lasso} estimator has well-established theoretical guarantees for signal and support recovery \cite{THW2015}. Despite its strengths, a well-recognized limitation of \textsc{Lasso} is its tendency to produce biased estimates. This bias arises from the shrinkage imposed by the $\ell_1$ penalty. Consequently, the bias compromises estimation accuracy and impedes statistical inference tasks such as construction of confidence intervals or hypothesis tests. These challenges are especially pronounced in high-dimensional regimes, where traditional inference tools fail due to high dimensionality.

To address these limitations, several methods have been developed to ``debias" the \textsc{Lasso} estimator, allowing for valid statistical inference even in high-dimensional settings. Notably, \cite{zhangzhang} introduced a decorrelated score-based approach, leveraging the Karush–Kuhn–Tucker (KKT) conditions of the \textsc{Lasso} optimization problem to construct bias-corrected estimators. Their framework relies on precise estimation of the precision matrix (inverse covariance matrix), which can be computationally challenging and sensitive to regularization choices. Similarly, \cite{vandegeer2014} proposed a methodology rooted in node-wise regression, where each variable is regressed on the remaining variables to estimate the precision matrix. While effective, this method is computationally intensive.
This may limit its applicability, particularly in scenarios where the design matrix lacks favorable properties like sparsity of the rows of the precision matrix.

\cite{Javanmard2014} introduced a simple yet powerful approach that constructs debiased \textsc{Lasso} estimates using an ``approximate inverse" of the sample covariance matrix. Their method avoids direct precision matrix estimation and instead employs an optimization framework to compute a debiasing matrix $\boldsymbol{M}$ that corrects for bias while ensuring asymptotic normality of the debiased estimates. A key advantage of this method is its 
applicability for random sub-Gaussian sensing matrices, enabling valid inference across a broad range of high-dimensional applications.

In this work, we build upon the technique of \cite{Javanmard2014}, addressing one of its primary computational bottlenecks: the optimization step required to compute the approximate inverse $\boldsymbol{M}$. By reformulating the problem to work directly with the ``weight matrix" $\boldsymbol{W} := \boldsymbol{AM^\top}$, we entirely eliminate the need to solve this optimization problem in many practical cases. Our proposed reformulation leverages the insight that the theoretical guarantees of the debiased \textsc{Lasso} estimator depend on the product $\boldsymbol{AM}^\top$ rather than the individual debiasing matrix $\boldsymbol{M}$. By shifting the focus to the ``weight matrix" $\boldsymbol{W} := \boldsymbol{AM}^\top$, we simplify the optimization problem while retaining all theoretical properties of the original framework. Under certain deterministic assumptions, we provide a simple, exact, closed form optimal solution for the optimization problem to obtain $\boldsymbol{W}$. We show that this assumption is satisfied with high probability for the ensembles of sensing matrices considered in \cite{Javanmard2014}, under the additional condition that the elements of the rows of $\boldsymbol{A}$ are uncorrelated. In practice, sensing matrices with uncorrelated entries are commonly used in many applications \cite{duarte2008single,liu2013efficient} and are also widely used in many theoretical results in sparse regression \cite{THW2015}. This closed form solution eliminates the computationally intensive optimization step required to compute $\boldsymbol{M}$, significantly improving runtime efficiency. It is  applicable in many natural situations, including sensing matrices with i.i.d.\ isotropic sub-Gaussian rows (such as i.i.d.\ Gaussian, or i.i.d.\ Rademacher entries).

\paragraph{Notation:} Throughout this paper, we denote matrices by bold-faced uppercase symbols, e.g., $\boldsymbol{A}$. If $\boldsymbol{A}$ is an $n\times p$ matrix then $\boldsymbol{a}_{i.}\in \mathbb{R}^p$ denotes the $i^{\textrm{th}}$ row of $\boldsymbol{A}$, thought of as a column vector. Similarly if $\boldsymbol{A}$ is an $n\times p$ matrix then $\boldsymbol{a}_{.j}\in \mathbb{R}^n$ denotes the $j^{\textrm{th}}$ column of $\boldsymbol{A}$, again thought of as a column vector. Vectors are denoted by bold-faced lower case symbols, e.g., $\boldsymbol{w}$. The $i$th entry of a vector $\boldsymbol{w}$ is denoted $w_i\in \mathbb{R}$. The identity matrix of size $p \times p$ for any positive integer $p$ is denoted by $\boldsymbol{I_p}$, and its $i^{\textrm{th}}$ column vector is denoted by $\boldsymbol{e}_i$. 
For a positive integer $p$, we use the shorthand $[p] = \{1,2,\ldots,p\}$. For a vector $\boldsymbol{w}\in \mathbb{R}^m$, we denote the $\ell_q$-norm by $\|\boldsymbol{w}\|_q := \left(\sum_{i=1}^{m}|w_i|^q\right)^{1/q}$ if $1\leq q < \infty$ and the $\ell_{\infty}$-norm by $\|\boldsymbol{w}\|_{\infty}:= \max_{i\in [m]} |w_i|$.  

\section{An Overview of the Debiased LASSO}
We consider the high-dimensional linear model
\begin{equation}\label{eq:forward_model}
    \boldsymbol{y} = \boldsymbol{A\beta^*} + \boldsymbol{\eta},
\end{equation} 
where $\boldsymbol{\beta^*} \in \mathbb{R}^p$ is a $s$-sparse signal (i.e., $s := \|\boldsymbol{\beta^*}\|_0$ where $s \ll p$), $\boldsymbol{A}$ is a $n \times p$ design/sensing matrix (where $n \ll p$), and $\boldsymbol{y} \in \mathbb{R}^n$ is the measurement vector. Also, $\boldsymbol{\eta} \in \mathbb{R}^n$ is an additive noise vector that consists of independent and identically distributed elements drawn from $\mathcal{N}(0, \sigma^2)$, where $\sigma^2$ is the noise variance.

The \textsc{Lasso} estimate $\boldsymbol{\hat{\beta}_{\lambda}}$ of the sparse signal $\boldsymbol{\beta^*}$ is defined as the solution to the following optimization problem:  
\begin{equation}\label{eq:lasso}
    \boldsymbol{\hat{\beta}_{\lambda}} := \arg \min_{\boldsymbol{\beta}} \frac{1}{2n}\|\boldsymbol{y} - \boldsymbol{A\beta}\|_2^2 + \lambda \|\boldsymbol{\beta}\|_1,
\end{equation}  
where $\lambda > 0$ is a regularization parameter chosen appropriately.  
The \textsc{Lasso} estimator is known to be a consistent estimator of the sparse signal $\boldsymbol{\beta^*}$ under the condition that the sensing matrix $\boldsymbol{A}$ satisfies the Restricted Eigenvalue Condition (REC) \cite[Chapter 11]{THW2015}.

\subsection{Debiasing the LASSO Estimator}
\label{subsec:debias_LASSO}
The \textsc{Lasso} estimator is well-known to produce biased estimates, i.e., $E(\boldsymbol{\hat{\beta}_{\lambda}}) \neq \boldsymbol{\beta^*}$ where the expectation is computed over noise instances. This bias arises from the $\ell_1$ regularization term, which induces shrinkage in the estimate $\boldsymbol{\hat{\beta}_{\lambda}}$. Moreover, there is no known method to compute a confidence interval of $\boldsymbol{\beta^*}$ directly from $\boldsymbol{\hat{\beta}_{\lambda}}$. 

To reduce this bias and also construct confidence intervals of $\boldsymbol{\beta^*}$, \cite{Javanmard2014} introduced a debiased \textsc{Lasso} estimator $\boldsymbol{\hat{\beta}_{d}}$, defined as follows:  
\begin{equation}
\boldsymbol{\hat{\beta}_{d}} := \boldsymbol{\hat{\beta}_{\lambda}} + \frac{1}{n} \boldsymbol{M} \boldsymbol{A}^\top (\boldsymbol{y} - \boldsymbol{A} \boldsymbol{\hat{\beta}_{\lambda}}).
\label{eq:debiased_beta1}
\end{equation}  
Here $\boldsymbol{M}$ is an approximate inverse of the rank deficient matrix $\boldsymbol{\hat{\Sigma}} := \boldsymbol{A}^\top \boldsymbol{A}/n$, computed by solving the convex optimization problem given in Algorithm~\ref{alg:M}. The theoretical properties of $\boldsymbol{\hat{\beta}_d}$ from \cite{Javanmard2014} are applicable to a sensing matrix $\boldsymbol{A}$ with the following properties:
\begin{enumerate}
\item [\textbf{D1:}]  The rows $\boldsymbol{a}_{1.},\boldsymbol{a}_{2.},\ldots,\boldsymbol{a}_{n,.}$ of matrix  $\boldsymbol{A}$ are independent and identically distributed zero-mean sub-Gaussian random vectors with covariance $\boldsymbol{\Sigma} :=E[\boldsymbol{a}_{i.}\boldsymbol{a}_{i.}^\top]$. 
Furthermore, the sub-Gaussian norm $\kappa :=\|\boldsymbol{\Sigma}^{-1/2} \boldsymbol{a}_{i.}\|_{\psi_2}$ \footnote{The sub-Gaussian norm of a random variable $x$, denoted by $\|x\|_{\psi_2}$, is defined as $\|x\|_{\psi_2} := \sup_{q \geq 1} q^{-1/2} \left({E}|x|^q \right)^{1/q}$. For a random vector $\boldsymbol{x} \in \mathbb{R}^n$, its sub-Gaussian norm is defined as $\|\boldsymbol{x}\|_{\psi_2} := \sup_{\boldsymbol{y} \in S^{n-1}} \| \boldsymbol{y^\top x}\|_{\psi_2}$, where $S^{n-1}$ denotes the unit sphere in $\mathbb{R}^n$.} is a finite positive constant. 
\item [\textbf{D2:}] There exist positive constants $0<C_{\min}\leq C_{\max}$, such that the minimum and maximum eigenvalues $\sigma_{\min}(\boldsymbol{\Sigma}), \sigma_{\max}(\boldsymbol{\Sigma})$ of $\boldsymbol{\Sigma}$ satisfy $0 < C_{\min} \leq \sigma_{\min}(\boldsymbol{\Sigma}) \leq \sigma_{\max}(\boldsymbol{\Sigma}) \leq C_{\max} < \infty $.
\end{enumerate}

Theorem 7(b) of \cite{Javanmard2014} shows that the optimization problem in~\eqref{eq:opt_problem} to obtain $\boldsymbol{M}$ is feasible with high probability, for sensing matrices satisfying properties \textbf{D1} and \textbf{D2}, as long as $\mu > 4\sqrt{3}e\kappa^2 \sqrt{\frac{C_{\max}}{C_{\min}}} \sqrt{\frac{\log p}{n}}$. If $\mu$  is $O\left(\sqrt{\frac{\log p}{n}}\right)$ and $n$ is $\omega((s \log p)^2)$, then Theorem 8 of \cite{Javanmard2014} shows that $\forall j \in [p], \sqrt{n}({\hat{\beta}}_{dj}-{\beta^*_j})$ is asymptotically zero-mean Gaussian with variance $\sigma^2 \boldsymbol{m}_{.j}^\top \boldsymbol{\hat{\Sigma}} \boldsymbol{m}_{.j}$.

\begin{algorithm}[H] 
\caption{Construction of $\boldsymbol{M}$ (from \cite{Javanmard2014})}
\label{alg:M}
\begin{algorithmic}[1]
\REQUIRE 
Design matrix $\boldsymbol{A}$, $\mu \in (0,1)$  
\ENSURE 
Debiasing matrix $\boldsymbol{M}$  
\STATE Compute: $\boldsymbol{\hat{\Sigma}} := \boldsymbol{A}^\top \boldsymbol{A}/n$.  
\STATE For each $j \in [p]$, solve the following optimization problem to compute column vector $\boldsymbol{m}_{.j} \in \mathbb{R}^p$:  
\begin{eqnarray}
\label{eq:opt_problem}
\nonumber \text{minimize} \quad  \boldsymbol{m}_{.j}^\top \boldsymbol{\hat{\Sigma}} \boldsymbol{m}_{.j} \\
\text{subject to} \quad \|\boldsymbol{\hat{\Sigma}} \boldsymbol{m}_{.j} - \boldsymbol{e}_j\|_{\infty} \leq \mu,
\end{eqnarray}
where $\boldsymbol{e}_j$ is the $j^\text{th}$ column of the identity matrix $\boldsymbol{I_p}$, and $\mu \in (0,1)$.  
\STATE Assemble $\boldsymbol{M}$ as $\boldsymbol{M} := (\boldsymbol{m}_{.1}|\cdots|\boldsymbol{m}_{.p})^\top$.  
\STATE If the optimization problem is infeasible for any $j$, set $\boldsymbol{M} := \boldsymbol{I_p}$.  
\end{algorithmic}
\end{algorithm}  
\section{Re-parameterization of the Debiased LASSO}
The debiased \textsc{Lasso} estimator in~\eqref{eq:debiased_beta1} can be rewritten in terms of the variable $\boldsymbol{W}:= \boldsymbol{A}\boldsymbol{M}^\top$  as:
\begin{equation}
\boldsymbol{\hat{\beta}_{d}} = \boldsymbol{\hat{\beta}_{\lambda}} + \frac{1}{n} \boldsymbol{W}^\top (\boldsymbol{y} - \boldsymbol{A \hat{\beta}_{\lambda}}).
\label{eq:debiased_beta_W}
\end{equation}  
The re-parameterization does not affect the debiasing procedure introduced in \cite{Javanmard2014}. Thus, any theoretical guarantees established using $\boldsymbol{M}$ extend to those using $\boldsymbol{W}$. 

We now produce a reformulated problem in \eqref{eq:opt_W_prim} using $\boldsymbol{W}$, and show that it is equivalent to the original optimization problem in Algorithm~\ref{alg:M}. 
Using the relationship $\boldsymbol{W} = \boldsymbol{AM^\top}$, we can rewrite $\boldsymbol{m}_{.j}$ as $\boldsymbol{w}_{.j} := \boldsymbol{Am}_{.j}$. Making this substitution, the objective in  \eqref{eq:opt_problem} becomes  
$\boldsymbol{m}_{.j}^\top \boldsymbol{\hat{\Sigma}} \boldsymbol{m}_{.j} = \frac{1}{n} \boldsymbol{w}_{.j}^\top \boldsymbol{w}_{.j}$ and the constraint  
 $\|\boldsymbol{\hat{\Sigma} m}_{.j} - \boldsymbol{e}_j\|_{\infty} \leq \mu$ (where $\boldsymbol{e}_j$ is the $j$th column of the identity matrix) becomes   
$\left\| \frac{1}{n} \boldsymbol{A}^\top \boldsymbol{w}_{.j} - \boldsymbol{e}_j \right\|_{\infty} \leq \mu.$
{This change of variables suggests the following reformulated optimization problem~\eqref{eq:opt_W_prim} for the $j^{\text{th}}$ column of $\boldsymbol{W}$:
\begin{eqnarray}\label{eq:opt_W_prim}
    \nonumber \mathcal{P}_j:= \textrm{minimize} & \quad  \frac{1}{n} \boldsymbol{w}_{.j}^{\top}\boldsymbol{w}_{.j} \\
    \textrm{subject to} & \quad \left\| \frac{1}{n} \boldsymbol{A}^\top \boldsymbol{w}_{.j} - \boldsymbol{e_j} \right\|_{\infty} \leq \mu.
\end{eqnarray}
In fact, the $j^{\text{th}}$ reformulated problem~\eqref{eq:opt_W_prim} and the $j^\text{th}$ original problem~\eqref{eq:opt_problem} are equivalent in the following sense: If $\boldsymbol{m}_{.j}$ is feasible for~\eqref{eq:opt_problem} then $\boldsymbol{w}_{.j} := \boldsymbol{A}\boldsymbol{m}_{.j}$ is feasible for~\eqref{eq:opt_W_prim} and $\frac{1}{n}\boldsymbol{w}_{.j}^\top \boldsymbol{w}_{.j} = \boldsymbol{m}_{.j}^\top \boldsymbol{\hat{\Sigma}}\boldsymbol{m}_{.j}$.
Conversely, suppose that $\boldsymbol{w}_{.j}$ is feasible for~\eqref{eq:opt_W_prim}. If  $\boldsymbol{A}^{\dagger}$ is a pseudo-inverse of $\boldsymbol{A}$, then $\boldsymbol{m}_{.j} := \boldsymbol{A}^{\dagger} \boldsymbol{w}_{.j}$ is feasible for~\eqref{eq:opt_problem} since  $\boldsymbol{\hat{\Sigma}}\boldsymbol{m}_{.j} = \frac{1}{n}\boldsymbol{A}^\top\boldsymbol{A}\boldsymbol{m}_{.j} = \frac{1}{n}\boldsymbol{A}^\top\boldsymbol{w}_{.j}$.
Moreover, $\frac{1}{n}\boldsymbol{w}_{.j}^\top \boldsymbol{w}_{.j} = \boldsymbol{m}_{.j}^\top \boldsymbol{\hat{\Sigma}}\boldsymbol{m}_{.j}$, so both have the same objective values, establishing that~\eqref{eq:opt_problem} and~\eqref{eq:opt_W_prim} are equivalent.
This reformulation provides an equivalent separable problem for each column of $\boldsymbol{W}$, maintaining all theoretical guarantees while simplifying the representation of the debiasing procedure.

The reformulated problem \eqref{eq:opt_W_prim} has a \emph{unique} optimal solution because the objective function is strongly convex with convex constraints. In contrast, the original problem~\eqref{eq:opt_problem} does not have a unique solution. Indeed if $\boldsymbol{m}_{.j}$ is any solution to~\eqref{eq:opt_problem}, then we can add to it any element of the nullspace of $\boldsymbol{A}$ to obtain another solution to~\eqref{eq:opt_problem}. 

\subsection{A Closed-Form Solution for the Debiasing Matrix $\boldsymbol{W}$}
In this section, we demonstrate that, for a suitable choice of $\mu$, the optimal solution to the problem \eqref{eq:opt_W_prim} can be computed in closed form for a sensing matrix whose minimum column norm is strictly positive (which is true with probability 1 for random matrices).
To derive this result we write down the Fenchel dual of~\eqref{eq:opt_W_prim}, and appeal to weak duality. In particular, we explicitly find primal and dual feasible points with the same objective value, certifying that both are, in fact, optimal.

\begin{theorem}\label{thm:dual_javanmard_gen}
Let $\boldsymbol{A}$ be a $n\times p$ matrix with no column equal to zero. Define $\rho(\boldsymbol{A}) := \max_{i\neq j} \frac{|\boldsymbol{a}_{.i}^\top \boldsymbol{a}_{.j}|}{\|\boldsymbol{a}_{.j}\|_2^2}$. The optimal solution of~\eqref{eq:opt_W_prim} is given by
\begin{equation}
\boldsymbol{w}_{.j} := \frac{n(1-\mu)}{\|\boldsymbol{a_{.j}}\|_2^2}\boldsymbol{a}_{.j} \quad \text{for all $j\in [p]$}
\label{eq:exact}
\end{equation}
if and only if $\frac{\rho}{1+\rho} \leq \mu \leq 1$.
\end{theorem}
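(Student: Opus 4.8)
The plan is to follow the route flagged just before the statement: write the Fenchel dual of $\mathcal{P}_j$, exhibit the vector in~\eqref{eq:exact} as a primal feasible point and a suitable vector as a dual feasible point, verify that the two objective values agree, and then invoke weak duality together with strong convexity of $\frac{1}{n}\boldsymbol{w}^{\top}\boldsymbol{w}$ (which forces a unique minimizer) to conclude. Write $\boldsymbol{w}_{.j}$ for the candidate $\frac{n(1-\mu)}{\|\boldsymbol{a}_{.j}\|_2^2}\boldsymbol{a}_{.j}$ and set $\rho_j := \max_{i\neq j}\frac{|\boldsymbol{a}_{.i}^{\top}\boldsymbol{a}_{.j}|}{\|\boldsymbol{a}_{.j}\|_2^2}$, so $\rho = \max_j\rho_j$. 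The first step is a direct feasibility computation, which drives the whole equivalence: with $\boldsymbol{v} := \frac{1}{n}\boldsymbol{A}^{\top}\boldsymbol{w}_{.j}-\boldsymbol{e}_j$ one has $v_j = (1-\mu)-1 = -\mu$, while for $i\neq j$, $v_i = (1-\mu)\,\frac{\boldsymbol{a}_{.i}^{\top}\boldsymbol{a}_{.j}}{\|\boldsymbol{a}_{.j}\|_2^2}$, so $|v_i|\le (1-\mu)\rho_j$ whenever $\mu\le 1$. Hence $\boldsymbol{w}_{.j}$ is feasible for $\mathcal{P}_j$ exactly when $\mu\le 1$ and $(1-\mu)\rho_j\le\mu$, i.e.\ when $\frac{\rho_j}{1+\rho_j}\le\mu\le 1$; since $t\mapsto t/(1+t)$ is increasing, the condition $\frac{\rho}{1+\rho}\le\mu\le 1$ guarantees feasibility for every $j$ simultaneously.

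For the dual, the convex conjugate of $\frac{1}{n}\|\cdot\|_2^2$ is $\frac{n}{4}\|\cdot\|_2^2$ and the conjugate of the indicator of the $\ell_\infty$-ball $\{\boldsymbol{v} : \|\boldsymbol{v}\|_\infty\le\mu\}$ is $\mu\|\cdot\|_1$, so the Fenchel dual of $\mathcal{P}_j$ reduces to the unconstrained concave problem of maximizing $-\frac{1}{4n}\|\boldsymbol{A}\boldsymbol{\lambda}\|_2^2-\lambda_j-\mu\|\boldsymbol{\lambda}\|_1$ over $\boldsymbol{\lambda}\in\mathbb{R}^p$, with primal--dual stationarity relation $\boldsymbol{w} = -\frac{1}{2}\boldsymbol{A}\boldsymbol{\lambda}$. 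Because $\boldsymbol{w}_{.j}$ is a scalar multiple of $\boldsymbol{a}_{.j}$, this relation pins down the ansatz $\boldsymbol{\lambda} = c\,\boldsymbol{e}_j$ with $c = -\frac{2n(1-\mu)}{\|\boldsymbol{a}_{.j}\|_2^2}\le 0$. Substituting this $\boldsymbol{\lambda}$ into the dual objective and simplifying (the only real cancellation is $-(1-\mu)+2-2\mu = 1-\mu$) yields dual value $\frac{n(1-\mu)^2}{\|\boldsymbol{a}_{.j}\|_2^2}$, which is exactly the primal value $\frac{1}{n}\|\boldsymbol{w}_{.j}\|_2^2$ of the candidate. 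When $\frac{\rho}{1+\rho}\le\mu\le 1$, the candidate is primal feasible by the previous step and $c\,\boldsymbol{e}_j$ is trivially dual feasible (the dual is unconstrained), so weak duality makes both optimal, and strong convexity upgrades this to: $\boldsymbol{w}_{.j}$ is \emph{the} unique optimum of $\mathcal{P}_j$, for every $j$. This is the ``if'' direction.

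For ``only if'' I would argue by contraposition, assuming $\mu\notin[\frac{\rho}{1+\rho},1]$. If $\mu>1$, then $\boldsymbol{w}=\boldsymbol{0}$ is feasible for $\mathcal{P}_j$ (since $\|\boldsymbol{e}_j\|_\infty=1<\mu$) with objective $0<\frac{n(1-\mu)^2}{\|\boldsymbol{a}_{.j}\|_2^2}$, so~\eqref{eq:exact} is not optimal. If $\mu<\frac{\rho}{1+\rho}$, pick $i\neq j$ attaining $\rho$; the feasibility computation above gives $|v_i| = (1-\mu)\rho > \mu$, so $\boldsymbol{w}_{.j}$ is not even feasible for $\mathcal{P}_j$, hence certainly not its optimal solution. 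Since $\frac{\rho}{1+\rho}<1$, these two cases are exhaustive, which completes the equivalence.

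The only genuine obstacle is setting up the dual correctly --- in particular pairing $\frac{1}{n}\|\cdot\|_2^2$ with its conjugate $\frac{n}{4}\|\cdot\|_2^2$, and tracking the $-\lambda_j$ term that the affine shift $-\boldsymbol{e}_j$ contributes --- and then recognizing that the dual optimum is supported on the single coordinate $j$. Both are essentially forced: the conjugate computation is routine, and the one-sparse ansatz is dictated by the stationarity relation $\boldsymbol{w}=-\frac{1}{2}\boldsymbol{A}\boldsymbol{\lambda}$ once one observes that the target primal point is proportional to $\boldsymbol{a}_{.j}$. Everything that remains is the elementary feasibility threshold and the one-line simplification noted above.
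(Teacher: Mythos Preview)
Your proposal is correct and follows essentially the same route as the paper: Fenchel-dualize $\mathcal{P}_j$, exhibit the primal candidate~\eqref{eq:exact} together with a one-sparse dual point (your $\boldsymbol{\lambda}=c\,\boldsymbol{e}_j$ is the paper's $\boldsymbol{u}=\frac{2n(1-\mu)}{\|\boldsymbol{a}_{.j}\|_2^2}\boldsymbol{e}_j$ up to a sign convention), match objective values to certify optimality via weak duality, and handle the converse by infeasibility when $\mu<\rho/(1+\rho)$ and by comparison with the zero vector when $\mu>1$. The only additions are your explicit appeal to strong convexity for uniqueness (which the paper records elsewhere) and the per-column quantities $\rho_j$; one minor imprecision is the phrase ``feasible exactly when $\mu\le 1$ and $(1-\mu)\rho_j\le\mu$'', since the candidate can also be feasible for some $\mu>1$, but your argument never uses that direction and is unaffected.
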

The proof of this theorem is given in Appendix~\ref{sec:proof_thm_dual}. For notational simplicity, we will denote $\rho(\boldsymbol{A})$ by $\rho$ in the rest of the paper.

\paragraph{Remarks:}
\begin{enumerate}
\item This theorem eliminates the requirement to execute an iterative optimization algorithm to obtain $\boldsymbol{W}$ (or an iterative optimization algorithm to obtain $\boldsymbol{M}$). This is because given $\boldsymbol{A}$, one can directly implement the optimal solution of Alg.~\ref{alg:M} in the form~\eqref{eq:exact} for all $j \in [p]$. This speeds up the implementation of the debiasing of \textsc{Lasso} for the ensemble of sensing matrices that satisfy the conditions of Theorem~\ref{thm:dual_javanmard_gen}. 

\item The condition $\rho/(1+\rho) \leq \mu \leq 1$ is necessary and sufficient for the closed-form expression in~\eqref{eq:exact} to be optimal~\eqref{eq:opt_W_prim}. Note, however, that it is possible that~\eqref{eq:opt_W_prim} is feasible for values of $\mu$ that are smaller than $\rho/(1+\rho)$. In these cases the optimal solution to~\eqref{eq:opt_W_prim} is not given by~\eqref{eq:exact}. This is empirically illustrated in Subsection~\ref{subsec:mu_line}.

\item The condition $\frac{\rho}{1+\rho}\leq \mu < 1$  can be satisfied whenever the column norms of $\boldsymbol{A}$ are strictly positive. Given the sensing matrix $\boldsymbol{A}$, the quantity $\frac{\rho}{1+\rho}$ can be computed exactly.

\item The solution in \eqref{eq:exact} is the optimal solution even when we choose $\mu=1$. The optimal solution in this case is the trivial solution $\boldsymbol{w}_j=0$. Hence, one should always choose $\mu<1$.



\end{enumerate}

Recall that as per Theorem 8 of \cite{Javanmard2014}, if $\mu$ is $O\left(\sqrt{\frac{\log p}{n}}\right)$ and $n$ is $\omega((s \log p)^2)$, then  $\forall j \in [p], \sqrt{n}({\hat{\beta}}_{dj}-{\beta^*_j})$ is asymptotically zero-mean Gaussian when the elements of $\boldsymbol{\eta}$ are drawn from $\mathcal{N}(0,\sigma^2)$. For specific classes of random matrices, we now show, in Theorem~\ref{th:choice_mu}, that $\frac{\rho}{1+\rho} \leq c_0\sqrt{\frac{\log p}{n}}$ with high probability for some constant $c_0$. This implies that for these random sensing matrices, the choice $\mu := O\left(\sqrt{\frac{\log p}{n}}\right)$ ensures \emph{both} the following: (\textit{i}) asymptotic debiasing for $\boldsymbol{\hat{\beta}_d}$ from \eqref{eq:debiased_beta_W} when $n$ is $\omega((s\log p)^2)$ (see Theorem 8 of \cite{Javanmard2014}), and (\textit{ii})
fulfillment of the sufficient condition $\frac{\rho}{1+\rho} \leq \mu$ for the debiasing matrix $\boldsymbol{W}$ to be computed in closed-form.
If the relation $\frac{\rho}{1+\rho} \leq c_0\sqrt{\frac{\log p}{n}}$ is to be satisfied with high probability, we need an additional (mild) assumption on $\boldsymbol{A}$ as defined below:

\begin{enumerate}
    \item [\textbf{D3:}] $\boldsymbol{\Sigma}$, as defined in \textbf{D1}, is a diagonal matrix, i.e., the elements of the rows of $\boldsymbol{A}$ are uncorrelated.
\end{enumerate}

\begin{theorem}\label{th:choice_mu}
Let $\boldsymbol{A}$ be a $n \times p$ dimensional matrix with independent and identically distributed zero-mean sub-Gaussian rows with uncorrelated entries and sub-Gaussian norm $\kappa:=\|\boldsymbol{\Sigma}^{-1/2}\boldsymbol{a}_{i.}\|_{\psi_2}$, where $n < p$ and $\boldsymbol{\Sigma}:=E[\boldsymbol{a_{i.}a_{i.}}^\top]$. Let $\rho$ be as defined in Theorem~\ref{thm:dual_javanmard_gen}. For any constant $c \in (\sqrt{2}/(1+\sqrt{2}),1)$, if $\boldsymbol{A}$ obeys properties \textbf{D1}, \textbf{D2} and \textbf{D3} and $n \geq \frac{4C_{\max}^2\kappa^4}{C_{\min}^2(1-c)^2} \log p$, then
\begin{equation}\label{eq:mu_bound}
P\left( \frac{\rho}{1+\rho} \leq 2\sqrt{2}\frac{\kappa^2}{c}\frac{C_{\max}}{C_{\min}}\sqrt{\frac{\log p}{n}}\right) \geq 1-\left(\frac{2}{p}+\frac{1}{p^2}\right).
\end{equation}
Furthermore, the choice $\mu:=2\sqrt{2}\frac{\kappa^2}{c}\frac{C_{\max}}{C_{\min}}\sqrt{\frac{\log p}{n}}$ ensures that the optimal debiasing matrix $\boldsymbol{W}$ is given by~\eqref{eq:exact} with high probability.
\end{theorem}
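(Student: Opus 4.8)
The plan is to bound $\rho$ from above by a quantity of order $\sqrt{(\log p)/n}$ with the stated probability, and then to invoke Theorem~\ref{thm:dual_javanmard_gen}. Since $x\mapsto x/(1+x)$ is increasing on $[0,\infty)$ and $\rho/(1+\rho)\le\rho$, it suffices to prove $\rho\le\mu$ with probability at least $1-(2/p+1/p^2)$, where $\mu:=2\sqrt 2\,\kappa^2 C_{\max}/(cC_{\min})\cdot\sqrt{(\log p)/n}$. I would use the elementary bound $\rho\le N/D$ with $N:=\max_{i\neq j}|\boldsymbol a_{.i}^\top\boldsymbol a_{.j}|$ and $D:=\min_{j\in[p]}\|\boldsymbol a_{.j}\|_2^2$, control $N$ and $D$ separately by concentration for sums of independent random variables, and intersect the two events at the end.

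For the numerator, fix $i\ne j$ and write $\boldsymbol a_{.i}^\top\boldsymbol a_{.j}=\sum_{k=1}^n A_{ki}A_{kj}$. By \textbf{D3}, $\boldsymbol\Sigma$ is diagonal, so $E[A_{ki}A_{kj}]=\Sigma_{ij}=0$ and the summands are mean-zero; since the rows are i.i.d.\ (\textbf{D1}) and $A_{ki}A_{kj}$ is a function of row $k$ only, they are also independent across $k$. Reading off the $j$-th coordinate of $\boldsymbol\Sigma^{-1/2}\boldsymbol a_{i.}$ (again using diagonality) gives $\|A_{ij}\|_{\psi_2}=\Sigma_{jj}^{1/2}\,\|(\boldsymbol\Sigma^{-1/2}\boldsymbol a_{i.})_j\|_{\psi_2}\le\kappa\,\Sigma_{jj}^{1/2}\le\kappa\sqrt{C_{\max}}$, hence $\|A_{ki}A_{kj}\|_{\psi_1}\le\|A_{ki}\|_{\psi_2}\|A_{kj}\|_{\psi_2}\le\kappa^2 C_{\max}$. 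Bernstein's inequality for sums of independent mean-zero sub-exponential variables, together with a union bound over the fewer than $p^2$ pairs, then yields $N\le 2\sqrt 2\,\kappa^2 C_{\max}\sqrt{n\log p}$ with probability at least $1-2/p$; the sample-size hypothesis $n\ge\tfrac{4C_{\max}^2\kappa^4}{C_{\min}^2(1-c)^2}\log p$ guarantees that the relevant deviation lies in the sub-Gaussian tail regime of Bernstein's bound, so that only the quadratic term is active.

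For the denominator, fix $j$ and write $\|\boldsymbol a_{.j}\|_2^2=\sum_{k=1}^n A_{kj}^2$, a sum of $n$ i.i.d.\ sub-exponential variables with mean $n\Sigma_{jj}\ge nC_{\min}$ and $\|A_{kj}^2\|_{\psi_1}=\|A_{kj}\|_{\psi_2}^2\le\kappa^2 C_{\max}$. The lower-tail Bernstein inequality plus a union bound over the $p$ columns gives $D\ge c\,nC_{\min}$ with probability at least $1-1/p^2$, and here the sample-size hypothesis is exactly what is needed to push the relative deviation below $1-c$. Intersecting the two events (probability at least $1-2/p-1/p^2$) and combining the bounds gives $\rho\le N/D\le\tfrac{2\sqrt 2\,\kappa^2 C_{\max}\sqrt{n\log p}}{cnC_{\min}}=\mu$, which together with $\rho/(1+\rho)\le\rho$ proves \eqref{eq:mu_bound}.

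For the final assertion it remains to check that the choice $\mu=2\sqrt 2\,\kappa^2 C_{\max}/(cC_{\min})\cdot\sqrt{(\log p)/n}$ satisfies $\mu\le1$, so that Theorem~\ref{thm:dual_javanmard_gen} applies on the above event. Squaring, $\mu\le1$ is equivalent to $n\ge\tfrac{8\kappa^4 C_{\max}^2}{c^2C_{\min}^2}\log p$, and this is implied by the hypothesis $n\ge\tfrac{4C_{\max}^2\kappa^4}{C_{\min}^2(1-c)^2}\log p$ precisely when $\tfrac{4}{(1-c)^2}\ge\tfrac{8}{c^2}$, i.e.\ $c(1+\sqrt 2)\ge\sqrt 2$, i.e.\ $c\ge\sqrt 2/(1+\sqrt 2)$ — which is exactly the assumed lower bound on $c$. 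Hence $\rho/(1+\rho)\le\mu\le1$ on an event of probability at least $1-(2/p+1/p^2)$, and Theorem~\ref{thm:dual_javanmard_gen} yields \eqref{eq:exact}. I expect the main obstacle to be purely quantitative: extracting the explicit constant $2\sqrt 2$ and the exact failure probabilities $2/p$ and $1/p^2$ requires careful bookkeeping of the sub-exponential norms and of which Bernstein regime is in force. I would also flag that \textbf{D3} is doing essential work in two distinct places — it makes $\boldsymbol a_{.i}^\top\boldsymbol a_{.j}$ mean-zero and it supplies the clean coordinatewise bound $\|A_{ij}\|_{\psi_2}\le\kappa\sqrt{C_{\max}}$ — so that without it the numerator need not concentrate around $0$.
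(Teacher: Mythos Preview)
Your approach is the same as the paper's: bound $\rho\le \nu/L$ (your $N/D$ after normalisation), control the cross-products via Bernstein for sub-exponentials, control the column norms via concentration of sums of sub-Gaussian squares, and union-bound. The one slip is the allocation of the two failure probabilities: with the constant $2\sqrt{2}$ and the stated sample-size hypothesis, it is the numerator event that enjoys failure $\le 1/p^2$ (since $nt^2/(2C_{\max}^2\kappa^4)=4\log p$ at $t=2\sqrt{2}C_{\max}\kappa^2\sqrt{(\log p)/n}$, and the union is over $\binom{p}{2}$ pairs), whereas the denominator event only gets failure $\le 2/p$ (per-column failure $2/p^2$ from $nt^2/(2C_{\max}^2\kappa^4)=2\log p$ at $t=(1-c)C_{\min}$, union over $p$ columns). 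Your claim that the denominator has failure $\le 1/p^2$ would need $n\ge 6C_{\max}^2\kappa^4/(C_{\min}^2(1-c)^2)\log p$ rather than the assumed $4\cdot(\ldots)$. This swap is harmless for the final bound, since both terms appear in the union anyway. A minor aside: the coordinatewise bound $\|A_{ij}\|_{\psi_2}\le\kappa\sqrt{C_{\max}}$ does not actually require \textbf{D3}; the paper derives it from \textbf{D1}--\textbf{D2} alone via $\|\boldsymbol a_{i.}\|_{\psi_2}\le\sigma_{\max}(\boldsymbol\Sigma^{1/2})\kappa$, so \textbf{D3} is only essential for making the cross-products mean-zero.
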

\paragraph{Remarks:}
\begin{enumerate}
\item 
The condition that $c \in (\sqrt{2}/(1+\sqrt{2}),1)$ ensures that when $n \geq \frac{4C_{\max}^2\kappa^4}{C_{\min}^2(1-c)^2} \log p$ and $\mu:=2\sqrt{2}\frac{\kappa^2}{c}\frac{C_{\max}}{C_{\min}}\sqrt{\frac{\log p}{n}}$, then we have $\mu < 1$.
    \item For the choice of $\mu:=2\sqrt{2}\frac{\kappa^2}{c}\frac{C_{\max}}{C_{\min}}\sqrt{\frac{\log p}{n}}$, the optimization problem in \eqref{eq:opt_W_prim} is feasible with high probability under the assumptions \textbf{D1} and \textbf{D2} (as per Theorem 7b of \cite{Javanmard2014}). Additionally, if $\boldsymbol{A}$ satisfies assumption \textbf{D3}, then $\boldsymbol{W}$ has the closed form solution as given in \eqref{eq:exact} with high probability. 
\end{enumerate}
The proof of Theorem \ref{th:choice_mu} is given in Appendix~\ref{subsec:thm2_pf}.
We give a brief overview here.
For an $n\times p$ matrix $\boldsymbol{A}$, let \begin{equation}\label{eq:Ldef}
L := \min_{j\in [p]} \frac{1}{n}\|\boldsymbol{a}_{.,j}\|_2^2
\end{equation} 
and let 
\begin{equation}\label{eq:nudef}
\nu := \max_{i\neq j} \frac{1}{n}|\boldsymbol{a}_{.i}^\top\boldsymbol{a}_{.j}|.
\end{equation} 
Then we have the bound \begin{equation}\label{eq:rho-bound}
\frac{\rho}{1+\rho} \leq \rho = \max_{i\neq j}\frac{|\boldsymbol{a}_{.i}^\top \boldsymbol{a}_{.j}|}{\|\boldsymbol{a}_{.j}\|_2^2} \leq \frac{\nu}{L}.
\end{equation} 
The proof then proceeds by using the following results: Theorem~\ref{th:lower_bnd_L} and Lemma~\ref{le:coherence_bnd}. 
 In Theorem \ref{th:lower_bnd_L}, we show that for an ensemble of sensing matrices satisfying assumptions \textbf{D1}, \textbf{D2} and \textbf{D3}, the parameter $L$ is greater than $c\,C_{\min}$, with high probability, for some constant $c$.

\begin{theorem}\label{th:lower_bnd_L}
    Let $\boldsymbol{A}$ be a $n \times p$ matrix with independently and identically distributed sub-Gaussian rows, where $n < p$. Consider $L$ as defined in \eqref{eq:Ldef}. For any constant $c \in (0,1)$ and $\kappa:=\|\boldsymbol{\Sigma^{-1/2}a_{i.}}\|_{\psi_2}$, if $\boldsymbol{A}$ satisfies properties \textbf{D1} and \textbf{D2} and $n \geq \frac{4C_{\max}^2\kappa^4}{C_{\min}^2(1-c)^2} \log p$, then
    \begin{equation}\label{eq:L_tail}
        P\left(L \geq c\, C_{\min} \right) \geq 1-\frac{2}{p}.
    \end{equation}
\end{theorem}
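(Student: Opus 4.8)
The plan is to control, for each column index $j\in[p]$, the lower tail of $\tfrac1n\|\boldsymbol{a}_{.j}\|_2^2$ (which is exactly the $j^{\text{th}}$ diagonal entry of $\boldsymbol{\hat\Sigma}$), and then take a union bound over the $p$ columns. Note that property \textbf{D3} is not used here: the argument only needs that the entries within a single column are i.i.d., which follows from \textbf{D1} alone. \emph{Reduction to a sum of i.i.d.\ sub-exponential terms.} Fix $j$. Since the rows of $\boldsymbol{A}$ are i.i.d., the coordinates $a_{1j},\dots,a_{nj}$ of $\boldsymbol{a}_{.j}$ are i.i.d.\ copies of $X:=\langle\boldsymbol{e}_j,\boldsymbol{a}_{1.}\rangle$, so $\tfrac1n\|\boldsymbol{a}_{.j}\|_2^2=\tfrac1n\sum_{i=1}^n a_{ij}^2$ is an average of i.i.d.\ nonnegative random variables with mean $E[X^2]=\boldsymbol{e}_j^\top\boldsymbol{\Sigma}\boldsymbol{e}_j$, which by \textbf{D2} lies in $[C_{\min},C_{\max}]$. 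Writing $X=\langle\boldsymbol{\Sigma}^{1/2}\boldsymbol{e}_j,\,\boldsymbol{\Sigma}^{-1/2}\boldsymbol{a}_{1.}\rangle$ and using the definition of the sub-Gaussian norm of a random vector together with \textbf{D1}--\textbf{D2} gives $\|X\|_{\psi_2}\le\|\boldsymbol{\Sigma}^{1/2}\boldsymbol{e}_j\|_2\,\kappa=(\boldsymbol{e}_j^\top\boldsymbol{\Sigma}\boldsymbol{e}_j)^{1/2}\kappa\le\sqrt{C_{\max}}\,\kappa$. Hence $X^2$ is sub-exponential with scale parameter of order $C_{\max}\kappa^2$ and mean $E[X^2]\ge C_{\min}$.

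\emph{One-sided concentration.} Because $E[X^2]\ge C_{\min}$, the event $\{\tfrac1n\sum_i a_{ij}^2\le c\,C_{\min}\}$ forces the average to fall below its mean by at least $t:=E[X^2]-c\,C_{\min}\ge(1-c)C_{\min}>0$. I would bound this by a one-sided Chernoff/Bernstein estimate for averages of i.i.d.\ sub-exponential variables: bounding the moment generating function of $-(X^2-E[X^2])$ via the elementary inequality $e^{-u}\le 1-u+\tfrac{u^2}{2}$ for $u\ge 0$ together with the fourth-moment estimate $E[X^4]\lesssim C_{\max}^2\kappa^4$ from the previous step, and then optimizing the Chernoff parameter, yields a bound of the form $P\!\left(\tfrac1n\sum_i a_{ij}^2\le c\,C_{\min}\right)\le\exp\!\big(-\,n t^2/(c' C_{\max}^2\kappa^4)\big)$ for an absolute constant $c'$; the optimal Chernoff parameter is admissible because the requested deviation satisfies $t\le C_{\max}$, so it never exceeds the sub-exponential scale — here one uses the elementary fact $\kappa^2\ge\tfrac12$ valid for any unit-covariance sub-Gaussian vector. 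Substituting $t\ge(1-c)C_{\min}$ and the sample-size hypothesis $n\ge\frac{4C_{\max}^2\kappa^4}{C_{\min}^2(1-c)^2}\log p$, the exponent becomes at least $2\log p$ (the precise constant in the hypothesis is exactly what is needed to absorb the explicit constant produced by the Chernoff optimization under the $\psi_2$-convention in use), so $P\!\left(\tfrac1n\|\boldsymbol{a}_{.j}\|_2^2<c\,C_{\min}\right)\le p^{-2}$.

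\emph{Union bound.} Finally, $P(L<c\,C_{\min})=P\!\left(\exists\,j\in[p]:\tfrac1n\|\boldsymbol{a}_{.j}\|_2^2<c\,C_{\min}\right)\le p\cdot p^{-2}=1/p\le 2/p$, which is~\eqref{eq:L_tail}. The step I expect to be the main obstacle is the one-sided concentration bound: making the sub-exponential lower-tail estimate sharp enough, and tracking the absolute constants through the moment-generating-function argument, so that the stated condition on $n$ (in particular the precise factor, and the fact that no requirement beyond $\kappa^2\ge\tfrac12$ is needed) is exactly what the bound delivers. This requires verifying (i) that the target deviation $t\asymp C_{\min}$ sits in the quadratic ("sub-Gaussian") regime of Bernstein's inequality rather than the heavier linear regime — consistent with the $C_{\max}^2\kappa^4/C_{\min}^2$ shape of the hypothesis — and (ii) that the constant emerging from the Chernoff optimization is no larger than what the factor in the sample-size condition accommodates; the remaining steps are routine bookkeeping.
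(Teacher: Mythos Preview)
Your proof follows the same three-step route as the paper: (i) bound $\|a_{ij}\|_{\psi_2}\le\sqrt{C_{\max}}\,\kappa$ via the decomposition $a_{ij}=(\boldsymbol{\Sigma}^{1/2}\boldsymbol{e}_j)^\top\boldsymbol{\Sigma}^{-1/2}\boldsymbol{a}_{i.}$ (the paper does exactly this); (ii) apply a concentration inequality to $\tfrac1n\sum_i a_{ij}^2$ for each fixed $j$; (iii) union-bound over the $p$ columns. The only difference is the tool in step~(ii): the paper quotes Theorem~3.1.1 and Eq.~(3.3) of Vershynin's \emph{High-Dimensional Probability} to obtain the two-sided bound $2\exp\!\big(-nt^2/(2C_{\max}^2\kappa^4)\big)$ and then sets $t=2C_{\max}\kappa^2\sqrt{(\log p)/n}$, whereas you derive a one-sided lower-tail bound from scratch via $e^{-u}\le 1-u+\tfrac{u^2}{2}$ and a fourth-moment estimate. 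Your self-contained route is conceptually fine, but your caveat about constants is exactly right: the crude bound $E[X^4]\le 16\|X\|_{\psi_2}^4$ (from the $\psi_2$-definition at $q=4$) puts $2E[X^4]\le 32C_{\max}^2\kappa^4$ in the denominator of the exponent rather than the paper's $2C_{\max}^2\kappa^4$, so with only the fourth-moment argument the stated sample-size hypothesis would miss by a factor of~$16$. The paper sidesteps this by citing Vershynin's sharper norm-concentration constant directly rather than rederiving it.
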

The proof of Theorem \ref{th:lower_bnd_L} is given in Appendix~\ref{subsec:Thm3_pf}.
In the upcoming Lemma we provide a high probability upper bound on $\nu$ for sensing matrices with independent and identically distributed zero-mean sub-Gaussian rows with uncorrelated entries. 

\begin{lemma}\label{le:coherence_bnd}
    Let $\boldsymbol{A}$ be a $n \times p$ dimensional matrix satisfying assumptions \textbf{D1}, \textbf{D2} and \textbf{D3} and sub-Gaussian norm $\kappa:=\|\boldsymbol{\Sigma^{-1/2}a_{i.}}\|_{\psi_2}$. Define  $\nu$ as in \eqref{eq:nudef}. Then 
\begin{equation}\label{eq:tail_nu}
    P\left(\nu \leq 2\sqrt{2}C_{\max}\kappa^2 \sqrt{\frac{\log p}{n}}\right) \geq 1-\frac{1}{p^2}.
\end{equation} 
\end{lemma}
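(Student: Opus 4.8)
The plan is to express $\nu$ as a maximum, over the $\binom{p}{2}$ off-diagonal column pairs, of sample averages of $n$ independent zero-mean sub-exponential random variables, control each such average with a Bernstein-type tail bound, and conclude with a union bound. Concretely, write $\frac{1}{n}\boldsymbol{a}_{.i}^\top\boldsymbol{a}_{.j} = \frac{1}{n}\sum_{k=1}^n a_{ki}a_{kj}$; by \textbf{D1} the summands are independent across $k$, and by \textbf{D3} the covariance $\boldsymbol{\Sigma}$ is diagonal, so for $i\neq j$ we have $E[a_{ki}a_{kj}] = \Sigma_{ij} = 0$, i.e.\ each summand is mean-zero.

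Next I would quantify the tails of the individual summands. Writing $a_{ki} = (\boldsymbol{\Sigma}^{1/2}\boldsymbol{e}_i)^\top(\boldsymbol{\Sigma}^{-1/2}\boldsymbol{a}_{k.})$ and using $\|\boldsymbol{v}^\top\boldsymbol{x}\|_{\psi_2}\le\|\boldsymbol{v}\|_2\,\|\boldsymbol{x}\|_{\psi_2}$ together with \textbf{D1}, \textbf{D2} gives $\|a_{ki}\|_{\psi_2}\le\sqrt{\Sigma_{ii}}\,\kappa\le\sqrt{C_{\max}}\,\kappa$. Since a product of two sub-Gaussian random variables is sub-exponential with $\psi_1$-norm bounded by the product of their $\psi_2$-norms, each $a_{ki}a_{kj}$ with $i\neq j$ is zero-mean sub-exponential with $\psi_1$-norm of order $C_{\max}\kappa^2$. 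Applying Bernstein's inequality for sums of independent sub-exponential variables and substituting the target deviation $t := 2\sqrt{2}\,C_{\max}\kappa^2\sqrt{\log p/n}$ — which, in the regime $n\gtrsim\log p$ in which this lemma is used inside Theorem~\ref{th:choice_mu}, falls into the sub-Gaussian branch of the Bernstein bound — yields a per-pair tail bound of the form $2\exp(-4\log p)=2/p^4$. A union bound over the $\binom{p}{2}\le p^2/2$ unordered pairs $\{i,j\}$ then gives $P(\nu>t)\le(p^2/2)\cdot(2/p^4)=1/p^2$, which is exactly~\eqref{eq:tail_nu}.

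The routine parts are the Orlicz-norm manipulations and the union bound; the one genuinely delicate point — and the main obstacle — is pinning the leading constant at precisely $2\sqrt{2}$. This requires a version of Bernstein's inequality with explicit constants (so that the sub-Gaussian-regime exponent is effectively $nt^2/\bigl(2(C_{\max}\kappa^2)^2\bigr)$) together with a correspondingly sharp bound on $\|a_{ki}a_{kj}\|_{\psi_1}$; a crude application of the generic "product-of-sub-Gaussians" inequality loses constants and would degrade~\eqref{eq:tail_nu}. An alternative that may make the constants cleaner is to first standardize, replacing $\boldsymbol{a}_{.i}$ by $\sqrt{\Sigma_{ii}}\,\boldsymbol{z}_{.i}$ where $\boldsymbol{Z}=\boldsymbol{A}\boldsymbol{\Sigma}^{-1/2}$ has isotropic sub-Gaussian rows, and then estimate the moment generating function of $z_{ki}z_{kj}$ directly, e.g.\ via the polarization identity $z_{ki}z_{kj}=\tfrac14\bigl[(z_{ki}+z_{kj})^2-(z_{ki}-z_{kj})^2\bigr]$ and concentration of the resulting squared sub-Gaussian terms around their (known) means.
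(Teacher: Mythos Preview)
Your proposal is correct and follows essentially the same route as the paper's proof: decompose $\frac{1}{n}\boldsymbol{a}_{.i}^\top\boldsymbol{a}_{.j}$ as an average of $n$ independent zero-mean sub-exponential variables (using \textbf{D3} for the zero mean and the product-of-sub-Gaussians lemma for the $\psi_1$-norm bound $C_{\max}\kappa^2$), apply Bernstein's inequality with exponent $nt^2/(2C_{\max}^2\kappa^4)$, plug in $t=2\sqrt{2}C_{\max}\kappa^2\sqrt{\log p/n}$ to get a per-pair bound $2/p^4$, and union-bound over $\binom{p}{2}$ pairs. The constant issue you flag is handled in the paper simply by invoking Vershynin's Lemma~2.7.7 and Corollary~2.8.3 with those explicit constants, so your polarization-identity alternative is not needed.
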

The proof of Lemma \ref{le:coherence_bnd} is provided in Appendix Sec.\ref{subsec:Le4_pf}. 

The reason that property \textbf{D3} arises in Lemma~\ref{le:coherence_bnd} is as follows. Recall that $\nu$ is the maximum (for $i\neq j$) of the absolute value of the random variables $\frac{1}{n}\boldsymbol{a}_{.i}^{\top}\boldsymbol{a}_{.j}$. These random variables have expectation $\boldsymbol{\Sigma}_{i,j}$, and so it is not possible for $\nu$ to go to zero, asymptotically, unless $\boldsymbol{\Sigma}$ is diagonal.

\section{Empirical Results}

\subsection{Validity of the exact solution}
\paragraph{Aim:} The debiased \textsc{Lasso} can be used to determine the support of the unknown vector $\boldsymbol{\beta^*}$ using hypothesis tests, as per Theorem 8 of \cite{Javanmard2014}. We aim to estimate the support using $p$ hypothesis tests (one per element of $\boldsymbol{\beta^*}$) based on the debiased \textsc{Lasso} estimates using the weights matrix $\boldsymbol{W}$ obtained from the optimization problem in \eqref{eq:opt_W_prim} (denoted by $\boldsymbol{W_o}$), and that obtained from the closed-form expression~\eqref{eq:exact} (denoted by $\boldsymbol{W_e}$), for varying number of measurements $n$. The aim is to also compare these support set estimates with the ground truth support set, and report sensitivity and specificity values (defined below). We will further show the difference in the run-time for both methods.

\paragraph{Signal Generation:} For our simulations, we chose our design matrix $\boldsymbol{A}$ to have elements drawn independently from the standard Gaussian distribution. We synthetically generated signals (i.e., $\boldsymbol{\beta^*}$) with $p=500$ elements in each. The non-zero values of $\boldsymbol{\beta^*}$ were drawn i.i.d.\ from $U(50,1000)$ and placed at randomly chosen indices. We set $s := \|\boldsymbol{\beta^*}\|_0 = 10$ and the noise standard deviation $\sigma:=0.05 \sum_{i=1}^n |\boldsymbol{a_{i.} \beta^*}|/n$. We varied $n \in 
 \{200,250,300,350,400,450,500\}$. We chose $\mu=\rho/(\rho+1)$ where $\rho$ was computed exactly given the sensing matrix $\boldsymbol{A}$.

\paragraph{Sensitivity and Specificity Computation:} Let us denote the debiased \textsc{Lasso} estimates obtained using a matrix $\boldsymbol{\mathsf{W}}$ by $\boldsymbol{\hat{\beta}_{d,\mathsf{W}}}$. We know that asymptotically $\hat{\beta}_{d,\mathsf{W}}(j) \sim \mathcal{N}(\beta^*_j,\sigma^2 \boldsymbol{\mathsf{W}}^{\top}_{.j} \boldsymbol{\mathsf{W}}_{.j})$ for all $j\in [p]$. 
Using this result, $\boldsymbol{\hat{\beta}_{d,\mathsf{W}}}$ was binarized to create a vector $\boldsymbol{\hat{b}_{\mathsf{W}}}$ in the following way: 
For all $j \in [p]$, we set $\hat{b}_{\mathsf{W}}(j) := 1$ if the value of $\hat{\beta}_{\mathsf{W}j}$ was such that the 
the hypothesis $\mathsf{H_{0,j}}: \beta^*_j=0$ was rejected against the alternate $\mathsf{H_{1,j}}: \beta^*_j\ne 0$  at $5 \%$ level of significance. $\hat{b}_{\mathsf{W}}(j)$ was set to 0 otherwise. Note that for the purpose of our simulation, we either have $\boldsymbol{\mathsf{W}} = \boldsymbol{W_o}$ or $\boldsymbol{\mathsf{W}} = \boldsymbol{W_e}$. The binary vectors corresponding to these choices of $\boldsymbol{\mathsf{W}}$ are respectively denoted by $\boldsymbol{\hat{b}_{W_o}}$ and $\boldsymbol{\hat{b}_{W_e}}$.

A ground truth binary vector $\boldsymbol{b^*}$ was created such that $b^*_j := 1$ at all locations $j$ where $\beta^*_j \ne 0$ and $b^*_j := 0$ otherwise. Sensitivity and specificity values were computed by comparing corresponding entries of $\boldsymbol{b^*}$ to those in $\boldsymbol{\hat{b}_{W_o}}$ and $\boldsymbol{\hat{b}_{W_e}}$. 
Considering the matrix $\boldsymbol{\mathsf{W}}$, we declared an element to be a \textit{true defective} if $b^*_j = 1$ and $\hat{b}_{\mathsf{W},j} = 1$, and a \textit{false defective} if $b^*_j = 0$ but $\hat{b}_{\mathsf{W},j} \ne 0$. We declare it to be a \textit{false non-defective} if $b^*_j = 0$ but $\hat{b}_{\mathsf{W},j} \ne 0$, and a \textit{true non-defective} if $\beta^*_j = 0$ and $\hat{b}_{\mathsf{W},j} = 0$. The \textbf{sensitivity} for $\boldsymbol{\beta^*}$ is defined as ($\#$ true defectives)/($\#$ true defectives + $\#$ false non-defectives) and
\textbf{specificity} for $\boldsymbol{\beta^*}$ is defined as ($\#$ true non-defectives)/($\#$ true non-defectives + $\#$ false defectives).

\paragraph{Results:} For obtaining $\boldsymbol{W_o}$, the optimization routine was executed using the \texttt{lsqlin} package in MATLAB. The sensivitiy and specificity were averaged over 25 runs with independent noise instances.
\begin{table}
    \centering
    \begin{tabular}{cccccccc}
    \toprule
    & \multicolumn{2}{c}{sensitivity} & \multicolumn{2}{c}{specificity} & 
    \multicolumn{2}{c}{time (in s)} & \\
    \cmidrule(lr){2-3}\cmidrule(lr){4-5}\cmidrule(lr){6-7}
       $n$  & $\boldsymbol{W_o}$ & $\boldsymbol{W_e}$ & $\boldsymbol{W_o}$ & $\boldsymbol{W_e}$ & $\boldsymbol{W_o}$ & $\boldsymbol{W_e}$& $\frac{\|\boldsymbol{W}_o-\boldsymbol{W}_e\|_F}{\|\boldsymbol{W}_e\|_F}$ \\\midrule
        200 & 0.6742 & 0.6742 & 0.8592 & 0.8592 & $3.88 \times 10^2$ & $1.11 \times 10^{-3}$ & $6.68 \times 10^{-10}$\\
        250 & 0.7229 & 0.7229 & 0.9063 & 0.9063 & $5.22 \times 10^2$ & $1.72 \times 10^{-3}$ & $2.31 \times 10^{-8}$\\
        300 &  0.8071 & 0.8071 & 0.9427 & 0.9427 & $3.29 \times 10^2$ & $2.25 \times 10^{-3}$ & $2.73 \times 10^{-7}$ \\
        350 & 0.8554 & 0.8554 & 0.9719 & 0.9719& $4.77 \times 10^2$ & $3.88 \times 10^{-3}$ & $2.56 \times 10^{-7}$\\
        400 & 0.9275 & 0.9275 & 0.9855 & 0.9855 & $5.59 \times 10^2$ & $7.82 \times 10^{-3}$ & $4.76 \times 10^{-7}$\\
        450 & 0.9781 & 0.9781 & 0.9909 & 0.9909 & $7.15 \times 10^2$ & $4.27 \times 10^{-2}$ & $5.29 \times 10^{-7}$\\
        500 & 0.9985 & 0.9985 & 0.9992 & 0.9992 & $8.03 \times 10^2$ & $7.56 \times 10^{-2}$ & $8.22 \times 10^{-7}$\\
    \bottomrule
    \end{tabular}
        \caption{Sensitivity and Specificity of hypothesis test using debiased estimates obtain from $\boldsymbol{W_o}$ (optimization method) and $\boldsymbol{W_e}$ (closed-form expression from~\eqref{eq:exact}) with its corresponding runtime in seconds for varying number of measurements. The fixed parameters are $p=500, s=10, \sigma:=0.05 \sum_{i=1}^n |\boldsymbol{a_{i.} \beta^*}|/n$. We set $\mu=\rho/(\rho+1)$ where $\rho$ is computed exactly for the chosen  sensing matrix $\boldsymbol{A}$.}
    \label{tab:sens_spec_Gauss}
\end{table}
    
In Table~\ref{tab:sens_spec_Gauss}, we can see that the sensitivity as well as the specificity of the hypothesis tests for $\boldsymbol{W_o}$ and $\boldsymbol{W_e}$ are equal. We further report the relative difference between $\boldsymbol{W_o}$ and $\boldsymbol{W_e}$ in the Frobenius norm. We can clearly see that the difference is negligible, which is consistent with Theorem~\ref{thm:dual_javanmard_gen}. Furthermore, we see that using the closed-form expression in~\eqref{eq:exact} saves significantly on time (by a factor of at least $10^4$). 
While the computational efficiency of the iterative approach can be improved by developing a specialized solver for problems of the form~\eqref{eq:opt_W_prim}, no iterative method is expected to outperform directly computing the simple closed-form expression~\eqref{eq:exact}.

\subsection{Difference between $\boldsymbol{W}_e$ and $\boldsymbol{W}_o$ for varying choices of $\mu$} \label{subsec:mu_line}

\paragraph{Aim:} In Theorem \ref{thm:dual_javanmard_gen}, we show that if $\frac{\rho}{1+\rho} \leq \mu < 1$, then the closed form solution of \eqref{eq:exact} represented by $\boldsymbol{W}_e$ is the same as the solution of the optimization problem  given in \eqref{eq:opt_W_prim} represented by $\boldsymbol{W}_o$. In this subsection, we investigate the difference between $\boldsymbol{W}_o$ and $\boldsymbol{W_e}$ for $\mu <  \frac{\rho}{1+\rho}$ as well as in the range $ \frac{\rho}{1+\rho} \leq \mu < 1$. We report the difference between $\boldsymbol{W}_e$ and $\boldsymbol{W}_o$ in terms of the \textit{Relative Error} given by $\left(\frac{\|\boldsymbol{W}_o-\boldsymbol{W}_e\|_F}{\|\boldsymbol{W_e}\|_F}\right)$ for $\mu=0.2,0.21,0.22,\ldots,0.60$.

\paragraph{Sensing matrix properties:} For this experiment, we fixed $n=80,p=100$. We ran this experiment for two different $n\times p$ sensing matrices $\boldsymbol{A}$ with elements drawn from: (1) i.i.d.\ Gaussian and, (2) i.i.d.\ Rademacher. In Figure~\ref{fig:mu_frob}, we plot $\mu$ vs $\left(\frac{\|\boldsymbol{W}_o-\boldsymbol{W}_e\|_F}{\|\boldsymbol{W_e}\|_F}\right)$ for both of these matrices on a log scale. The exact value of $ \frac{\rho}{1+\rho}$ is given by a black vertical line in each case.

\paragraph{Observation:} We see that for both the plots in Figure \ref{fig:mu_frob}, the relative error decreases with increase in $\mu$ for $\mu<  \frac{\rho}{1+\rho}$. For $\mu \geq  \frac{\rho}{1+\rho}$, the relative error is very small with fluctuations primarily due to the solver tolerances in \texttt{lsqlin} when computing $\boldsymbol{W}_o$.
\begin{figure}
    \centering
    \includegraphics[height=1.50in]{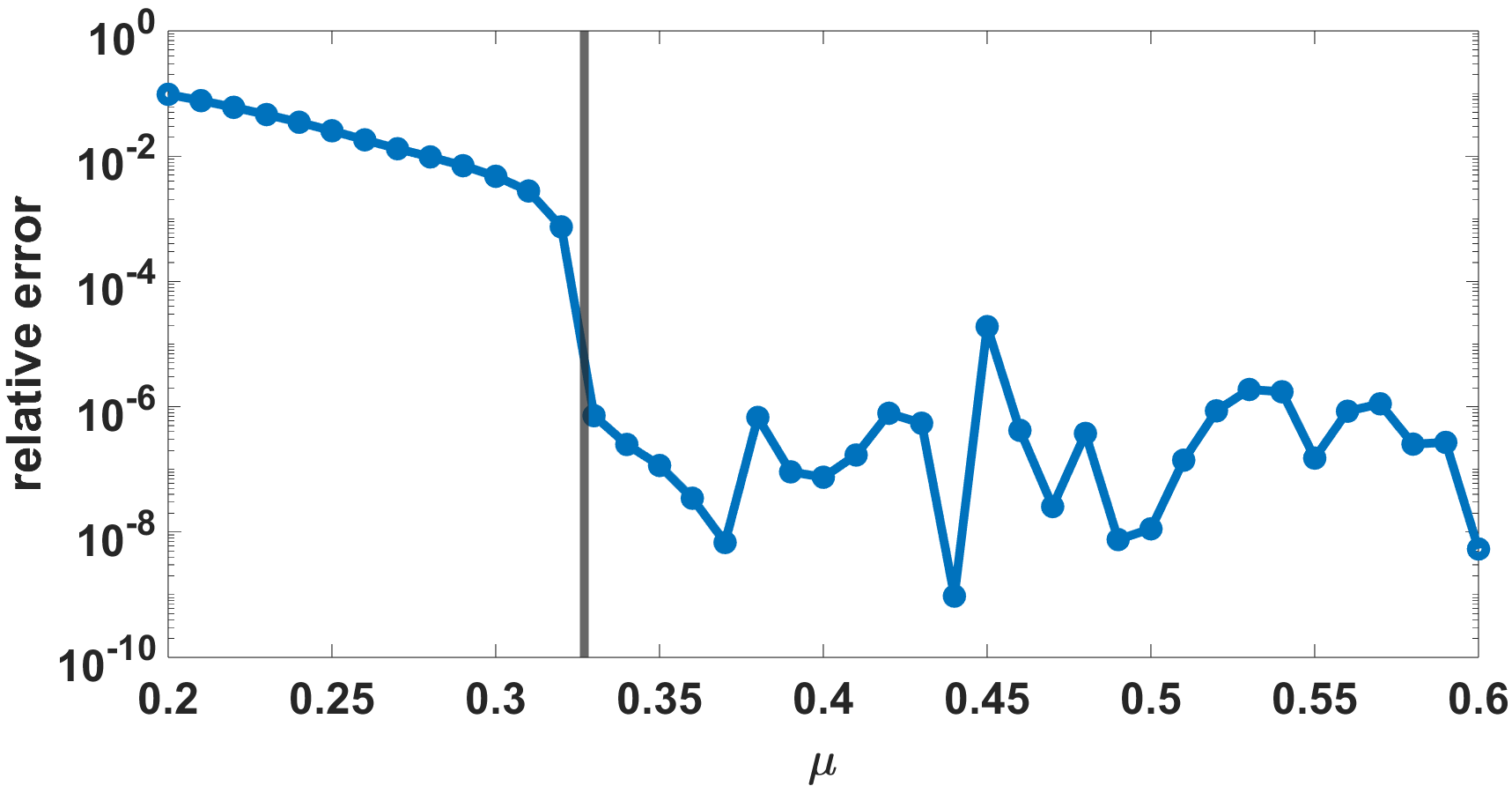}
    \includegraphics[height=1.50in]{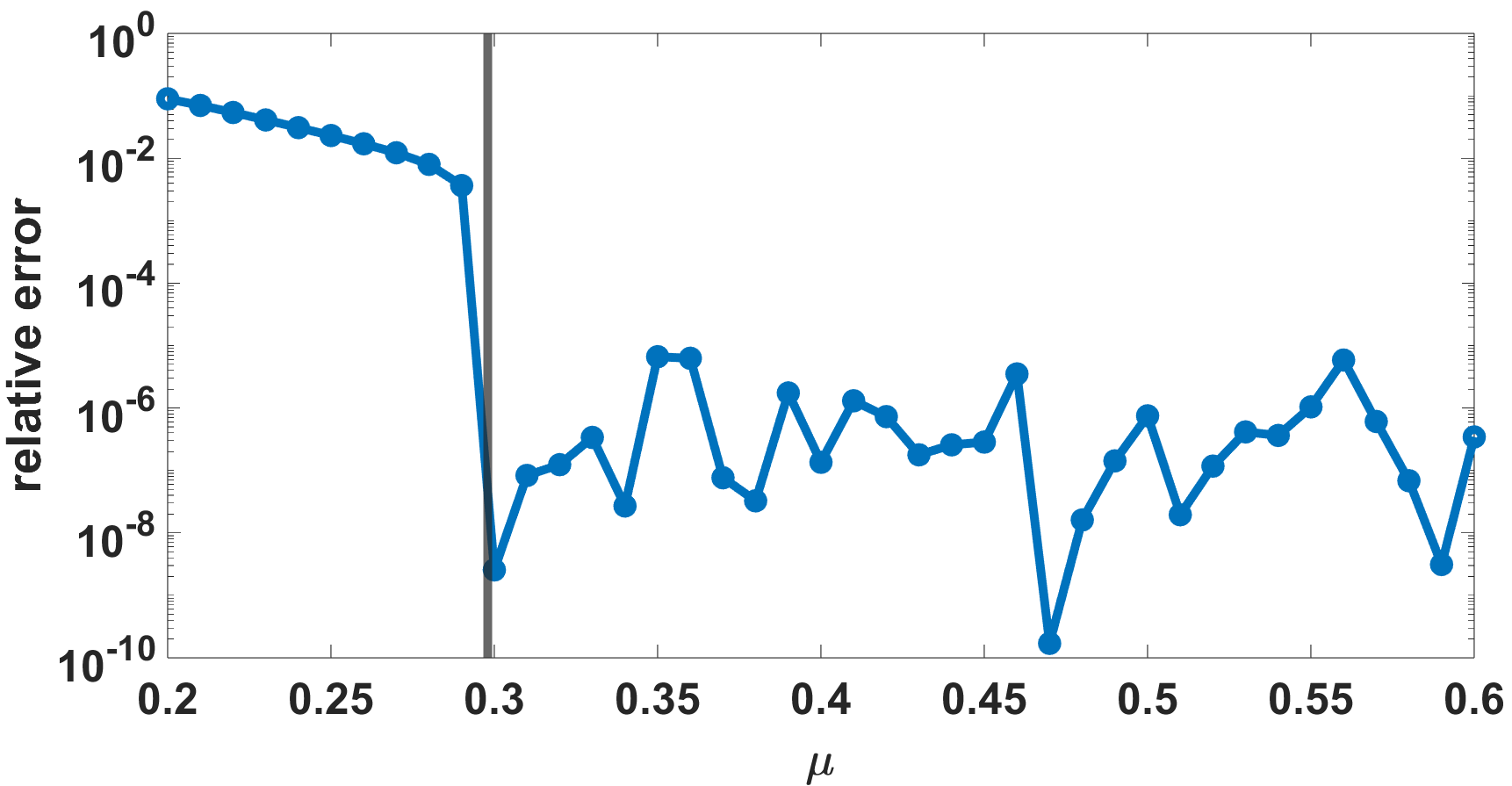}
    \caption{Line plot of $\mu$ vs relative error $ \left(\frac{\|\boldsymbol{W}_o-\boldsymbol{W}_e\|_F}{\|\boldsymbol{W_e}\|_F}\right)$ (in log scale) for two $80 \times 100$ dimensional sensing matrices: (left) i.i.d.\ Gaussian and (right) i.i.d.\ Rademacher. The exact value of $ \frac{\rho}{1+\rho}$ is given by the black vertical line. The value of $ \frac{\rho}{1+\rho}$ is $0.327$ for the Gaussian sensing matrix (left) and $0.298$ for the Rademacher sensing matrix  (right). Here, $\boldsymbol{W_o}$ is the solution of the optimization problem in \eqref{eq:opt_W_prim} and $\boldsymbol{W}_e$ is computed as in \eqref{eq:exact}.}
    \label{fig:mu_frob}
\end{figure}
Furthermore the decrease in relative error is sharp after the value of $\mu$ crosses $ \frac{\rho}{1+\rho}$.

\section{Conclusion}
In this article, we reformulate the optimization problem to obtain $\boldsymbol{M}$ (the approximate inverse of the covariance matrix of the rows of the sensing matrix $\boldsymbol{A}$) in \cite{Javanmard2014} and further provide an exact, closed-form optimal solution to the reformulated problem under assumptions on the pairwise inner products of the columns of $\boldsymbol{A}$. For sensing matrices with i.i.d.\ zero-mean sub-Gaussian rows that have diagonal covariance, the debiased \textsc{Lasso} estimator, based on this closed-form solution, has entries that are asymptotically zero-mean and Gaussian. The exact solution significantly improves the time efficiency for debiasing the \textsc{Lasso} estimator, as shown in the numerical results. Our method is particularly useful for debiasing in streaming settings where new measurements arrive on the fly. 

\appendix
\section{Additional proofs}
\subsection{Proof of Theorem \ref{thm:dual_javanmard_gen}}
\label{sec:proof_thm_dual}
\paragraph{Primal feasibility:}
If $\frac{\rho}{1+\rho}\leq \mu \leq 1$ then we have that $\mu + \mu\rho \geq \rho$ which implies that $0\leq (1-\mu)\rho \leq \mu$. The choice of $\boldsymbol{w}_{.j}$ given by \eqref{eq:exact} is primal feasible since 
\begin{equation}
\left\|\frac{1}{n}\boldsymbol{A}^\top\frac{(1-\mu)}{\frac{\|\boldsymbol{a_{.j}}\|_2^2}{n}}\boldsymbol{a}_{.j} - \boldsymbol{e}_j\right\|_{\infty} \leq \max\{ \mu, |(1-\mu)\rho|\} = \mu.
\end{equation}
To see why this is true, note that for index $j$, the LHS is upper bounded by $\mu$, otherwise it is upper bounded by $|(1-\mu)\rho|$. 


\paragraph{Primal objective function value:} The primal objective function value is given by $\frac{1}{n}\|\boldsymbol{w}_{.j}\|^2_2 =\dfrac{(1-\mu)^2}{(\|\boldsymbol{a}_{.j}\|_2^2/n)^2}\|\boldsymbol{a}_{.j}\|_2^2/n =\dfrac{(1-\mu)^2}{\|\boldsymbol{a_{.j}}\|_2^2/n}$. 

\paragraph{The Fenchel dual problem:} Consider an optimization problem of the form for a fixed $j \in [p]$:
\begin{equation} 
\label{eq:abstract-primal}\inf_{\boldsymbol{w}} f(\boldsymbol{w}) + g_j\left(\frac{1}{n}\boldsymbol{A}^\top \boldsymbol{w}\right)\end{equation}
where $f$ and $g_j$ are extended real-valued convex functions. The Fenchel dual (see Chapter 3 of \cite{BorweinLewis2006}) is
\begin{equation}\label{eq:abstract-dual} \sup_{\boldsymbol{u}} -f^*\left(\frac{1}{n}\boldsymbol{A}\boldsymbol{u}\right) - g_j^*(-\boldsymbol{u})\end{equation}
where $f^*$ and $g_j^*$ are the convex conjugates of $f$ and $g_j$ respectively. The Fenchel dual satisfies weak duality (see Chapter 3 of \cite{BorweinLewis2006}), i.e., for any $\boldsymbol{w}$ and $\boldsymbol{u}$, 
\[ f(\boldsymbol{w}) + g_j\left(\frac{1}{n}\boldsymbol{A}^\top\boldsymbol{w}\right) \geq -f^*\left(\frac{1}{n}\boldsymbol{A}\boldsymbol{u}\right) - g_j^*(-\boldsymbol{u}).\]
In our setting, for a fixed $j$, we consider 
\begin{equation}
\label{eq:f_g}
f(\boldsymbol{w}) := \frac{1}{n}\|\boldsymbol{w}\|^2\quad\textup{and}\quad g_j(\boldsymbol{w}) := \begin{cases} 0 & \textup{if $\|\boldsymbol{w} - \boldsymbol{e}_j\|_{\infty} \leq \mu$}\\ \infty & \textup{otherwise}\end{cases}.
\end{equation}
Then, for the same $j$, we have their convex conjugates from Lemma~\ref{le:convex conjugates}: 
\begin{eqnarray}
f^*(\boldsymbol{u})& =& \sup_{\boldsymbol{w}}  \boldsymbol{u}^\top\boldsymbol{w} - f(\boldsymbol{w}) = \frac{n}{4}\|\boldsymbol{u}\|^2, \\
g_j^*(\boldsymbol{u}) &=& \sup_{\boldsymbol{w}} \boldsymbol{u}^\top\boldsymbol{w} - g_j(\boldsymbol{w}) = \sup_{\|\boldsymbol{w}-\boldsymbol{e}_j\|_{\infty}\leq \mu}\boldsymbol{u}^\top\boldsymbol{w} = u_j + \mu\|\boldsymbol{u}\|_1.
\end{eqnarray}
This gives a dual problem in the form
$\sup_{\boldsymbol{u}} - \frac{1}{4n}\boldsymbol{u}^\top \boldsymbol{A}^\top \boldsymbol{A}\boldsymbol{u} + u_j - \mu\|\boldsymbol{u}\|_1$.

\noindent The point $\boldsymbol{u} := \dfrac{2(1-\mu)\boldsymbol{e}_j}{\|\boldsymbol{a_{.j}}\|_2^2/n}$ is feasible for the dual (trivially, as there are no constraints). 

\paragraph{Dual objective function value:} Plugging in $\boldsymbol{u} = \dfrac{2(1-\mu)\boldsymbol{e}_j}{\|\boldsymbol{a_{.j}}\|_2^2/n}$, the corresponding dual objective function value is 
\begin{align*}
-\frac{1}{4n}\boldsymbol{u}^\top \boldsymbol{A}^\top \boldsymbol{A}\boldsymbol{u} + u_j - \mu\|\boldsymbol{u}\|_1 & = -\frac{1}{4n} \|\boldsymbol{a}_{.j}\|^2 \frac{4(1-\mu)^2}{(\|\boldsymbol{a_{.j}}\|_2^2/n)^2} +\frac{2(1-\mu)}{\|\boldsymbol{a_{.j}}\|_2^2/n} - \mu\frac{2(1-\mu)}{\|\boldsymbol{a_{.j}}\|_2^2/n}\\
&= -\frac{(1-\mu)^2}{\|\boldsymbol{a_{.j}}\|_2^2/n}+2\frac{(1-\mu)^2}{\|\boldsymbol{a_{.j}}\|_2^2/n} = \frac{(1-\mu)^2}{\|\boldsymbol{a_{.j}}\|_2^2/n}.\end{align*}

Since the primal solution and the dual objective function values are equal, it follows that an optimal solution for the primal is $\dfrac{(1-\mu)}{\|\boldsymbol{a_{.j}}\|_2^2/n}\boldsymbol{a}_{.j}$, and that an optimal solution to the dual is $\dfrac{2(1-\mu)}{\|\boldsymbol{a_{.j}}\|_2^2/n}\boldsymbol{e}_j$.

We have shown that if $\rho/(1+\rho)\leq \mu \leq 1$ then the optimal solution of~\eqref{eq:opt_W_prim} is given by~\eqref{eq:exact}. Now consider the case when $\mu < \rho/(1+\rho)$. This implies $\mu < (1-\mu)\rho$. Let $i,j\in [p]$ (with $i\neq j$) be such that $\rho = |\boldsymbol{a}_{.i}^\top \boldsymbol{a}_{.j}|/\|\boldsymbol{a}_{.j}\|_2^2$. Then plugging in the expression $\boldsymbol{w}_{.j} := \frac{n(1-\mu)}{\|\boldsymbol{a_{.j}}\|_2^2}\boldsymbol{a}_{.j}$ from~\eqref{eq:exact} into the constraint of~\eqref{eq:opt_W_prim} we have, \[\left\|\frac{1}{n}\boldsymbol{A}^\top\frac{(1-\mu)}{\frac{\|\boldsymbol{a_{.j}}\|_2^2}{n}}\boldsymbol{a}_{.j} - \boldsymbol{e}_j\right\|_{\infty} \geq (1-\mu)|\boldsymbol{a}_{.i}^\top \boldsymbol{a}_{.j}|/\|\boldsymbol{a}_{.j}\|_2^2 = (1-\mu)\rho > \mu.\] This shows that $\boldsymbol{w}_{.j}$ (defined in~\eqref{eq:exact}) is not feasible for~\eqref{eq:opt_W_prim} when $\mu < \rho/(1+\rho)$, and so is certainly not optimal.

Finally, consider the case when $\mu>1$. If $\mu \geq 1$, then the unique optimal solution of ~\eqref{eq:opt_W_prim} is $\boldsymbol{w}_{.j} = \boldsymbol{0}$. This is because $\boldsymbol{0}$ is feasible and is the global minimizer of the objective function.
However, when $\mu>1$, the formula~\eqref{eq:exact} does not give the value $\boldsymbol{0}$, and so is not the optimal solution to~\eqref{eq:opt_W_prim}.

This concludes the proof that $\frac{\rho}{1+\rho} \leq \mu \leq 1$ is necessary and sufficient condition for the expression given in \eqref{eq:exact} to be optimal.

\subsection{Proof of Theorem \ref{th:choice_mu}}
\label{subsec:thm2_pf}
In Theorem \ref{th:lower_bnd_L} we show that if $n \geq \frac{4C_{\max}^2\kappa^4}{C_{\min}^2(1-c)^2} \log p$ then  $ \frac{\rho}{1+\rho} \leq \frac{\nu}{L} \leq \frac{\nu}{c C_{\min}}$ with probability at least $1-\frac{2}{p}$. Now, Lemma \ref{le:coherence_bnd} shows that $\nu \leq 2\sqrt{2}C_{\max}{\kappa^2} \sqrt{\frac{\log p}{n}}$ with probability at least $1-\frac{1}{p^2}$. Hence, by union bound, we have,
\begin{equation}\label{eq:mu_bound_penul}
     P\left( \frac{\rho}{1+\rho} \leq \frac{\nu}{c\, C_{\min}} \leq  2\sqrt{2}\frac{\kappa^2}{c}\frac{C_{\max}}{C_{\min}}\sqrt{\frac{\log p}{n}}\right) \geq 1-\left(\frac{2}{p}+\frac{1}{p^2}\right)
\end{equation}
 This completes the proof. 

\subsection{Proof of Theorem \ref{th:lower_bnd_L}}\label{subsec:Thm3_pf}
 We have for all $j \in [p]$, $\frac{\|\boldsymbol{a}_{.j}\|_2^2}{n}=\frac{1}{n}\sum_{i=1}^n a_{ij}^2$. Since the $\boldsymbol{a}_{i.}$ (for $i \in [n]$) are sub-Gaussian, the $a_{ij}$ are sub-Gaussian for each $j\in [p]$ and $\|a_{ij}\|_{\psi_2} \leq \|\boldsymbol{a}_{i.}\|_{\psi_2}$.
 By the definition of the sub-Gaussian norm (see footnote in Sec.~\ref{subsec:debias_LASSO} with $q = 2$), we know that  \begin{equation}\label{eq:subg1} \frac{1}{2}E[a_{ij}^2] \leq \|a_{ij}\|_{\psi_2}^2 = \|\boldsymbol{e}_j^\top \boldsymbol{a}_{i.}\|_{\psi_2}^2 \leq \|\boldsymbol{a}_{i.}\|_{\psi_2}^2.
 \end{equation}
 Recall that $\kappa:=\|\boldsymbol{\Sigma}^{-1/2}\boldsymbol{a}_{i.}\|_{\psi_2}$ in property \textbf{D1} of sensing matrix $\boldsymbol{A}$. We have
 \begin{align}
     \|\boldsymbol{a}_{i.}\|_{\psi_2} & = \sup_{\boldsymbol{v}\in S^{p-1}}\left\|(\boldsymbol{\Sigma}^{1/2}\boldsymbol{v})^\top \boldsymbol{\Sigma}^{-1/2}\boldsymbol{a}_{i.}\right\|_{\psi_2}\nonumber\\
     & = \sup_{\boldsymbol{v}\in S^{p-1}}\|\boldsymbol{\Sigma}^{1/2}\boldsymbol{v}\|_2 \left\|\frac{1}{\|\boldsymbol{\Sigma}^{1/2}\boldsymbol{v}\|_2}(\boldsymbol{\Sigma}^{1/2}\boldsymbol{v})^\top \boldsymbol{\Sigma}^{-1/2}\boldsymbol{a}_{i.}\right\|_{\psi_2}\nonumber\\
     & \leq \sup_{\boldsymbol{v}\in S^{p-1}}\|\boldsymbol{\Sigma}^{1/2}\boldsymbol{v}\|_2 \sup_{\boldsymbol{z}\in S^{p-1}}\left\|\frac{1}{\|\boldsymbol{\Sigma}^{1/2}\boldsymbol{z}\|_2}(\boldsymbol{\Sigma}^{1/2}\boldsymbol{z})^\top \boldsymbol{\Sigma}^{-1/2}\boldsymbol{a}_{i.}\right\|_{\psi_2}\nonumber\\
     & \leq \sigma_{\max}(\boldsymbol{\Sigma}^{1/2}) \|\boldsymbol{\Sigma}^{-1/2}\boldsymbol{a}_{i.}\|_{\psi_2}\nonumber\\
     & \leq \sqrt{C_{\max}}\,\kappa,\label{eq:subg2}
     \end{align} where $C_{\max}$ is defined in property \textbf{D2}. Therefore, we obtain $E[a_{ij}^2]\leq 2 \|\boldsymbol{a}_{i.}\|_{\psi_2}^2 \leq 2C_{\max}\kappa^2$.
 From the definition of eigenvalues, for any $\boldsymbol{x} \in \mathbb{R}^p$, $\boldsymbol{x}^\top \boldsymbol{\Sigma x} \geq \sigma_{\min}(\boldsymbol{\Sigma}) \|\boldsymbol{x}\|_2^2 \geq C_{\min}\|\boldsymbol{x}\|_2^2$. Putting $\boldsymbol{x}=\boldsymbol{e}_j$, where $\boldsymbol{e}_j$ is the $j^{\text{th}}$ column of $\boldsymbol{I_p}$, we have, $\Sigma_{jj}\geq C_{\min}$. Since $E[a_{ij}^2]=\Sigma_{jj} \geq C_{\min}$,
we have, $E\left[\frac{1}{n}\sum_{i=1}^n a_{ij}^2\right] \geq C_{\min}$. 

For a given $j \in [p]$, the variables $a_{ij}^2$ are independent for all $i \in [n]$. Hence, using the concentration inequality of Theorem 3.1.1 and Equation (3.3) of \cite{Vershynin2018}, we have for $t>0$\footnote{We have set $c = 1/2$, $\delta := t$ and $K := 2\sqrt{C_{\max}}\kappa$ in Equation (3.3) and the equation immediately preceding it in  \cite{Vershynin2018}},
 \begin{equation}\label{eq:conc_ineq_subG}
     P\left(\left|\|\boldsymbol{a}_{.j}\|_2^2/n- E[\|\boldsymbol{a}_{.j}\|_2^2/n]\right| \geq t \right) \leq 2e^{-\frac{nt^2}{2C_{\max}^2\kappa^4}}.
 \end{equation}
Using the left-sided inequality of \eqref{eq:conc_ineq_subG}, we have,
\begin{equation}\label{eq:left_conc_ineq_subG}
    P\left(\|\boldsymbol{a}_{.j}\|_2^2/n \leq E[\|\boldsymbol{a}_{.j}\|_2^2/n] - t\right)\leq 2e^{-\frac{nt^2}{2C_{\max}^2\kappa^4}}.
\end{equation}
Using $E[\|\boldsymbol{a}_{.j}\|_2^2/n] \geq {C_{\min}}$, \eqref{eq:left_conc_ineq_subG} can be rewritten as follows for $t>0$:
\begin{equation}\label{eq:conc_lower_subG}
    P\left(\|\boldsymbol{a}_{.j}\|_2^2/n \leq C_{\min}- t\right)\leq 2e^{-\frac{nt^2}{2C_{\max}^2\kappa^4}}.
\end{equation}
 Using the union bound on \eqref{eq:conc_lower_subG} over $j \in [p]$, we obtain the following lower tail bound on $L$:
\begin{equation}\label{eq:lower_tail_L}
    P(L \leq C_{\min}- t) \leq 2pe^{-\frac{nt^2}{2C_{\max}^2\kappa^4}}.
\end{equation}
Putting $t:=2C_{\max}\kappa^2\sqrt{\frac{\log p}{n}}$ in \eqref{eq:lower_tail_L}, we obtain:
\begin{equation}\label{eq:lower_bound_L}
    P\left(L \leq C_{\min}\left(1-2\frac{C_{\max}}{C_{\min}}\kappa^2\sqrt{\frac{\log p}{n}}\right) \right) \leq \frac{2}{p}.
\end{equation}
For some constant $c\in (0,1)$, if $n \geq \frac{4C_{\max}^2\kappa^4}{C_{\min}^2(1-c)^2} \log p$, then \eqref{eq:lower_bound_L} becomes:
\begin{equation}\label{eq:final_tail_L}
    P(L\leq c\, C_{\min}) \leq \frac{2}{p} \implies P(L\geq c\, C_{\min}) \geq 1-\frac{2}{p}.
\end{equation}
 This completes the proof. 

\subsection{Proof of Lemma~\ref{le:coherence_bnd}}\label{subsec:Le4_pf}
We have $ \frac{1}{n}| \boldsymbol{a}_{.l}^\top\boldsymbol{a}_{.j}|=\frac{1}{n}\sum_{i=1}^n a_{ij} a_{il}$. Here, for given $j \ne l$, we know that $a_{ij}$ and $a_{il}$ are independent zero-mean sub-Gaussian random variables. From from~\eqref{eq:subg1} and~\eqref{eq:subg2} we know that their sub-Gaussian norm is at most $\sqrt{C_{\max}}\kappa$ for all $i \in [n]$. Using Lemma 2.7.7 of \cite{vershynin2018high}, we have that for all $i \in [n]$, $a_{ij}a_{il}$ are independent sub-Exponential random variables with sub-exponential norm at most $C_{\max}\kappa^2$. Moreover, $E[a_{ij}a_{il}] = \boldsymbol{\Sigma}_{jl} = 0$ by property \textbf{D3}. Hence, using Bernstien's inequality for averages of independent zero-mean, sub-exponential random variables, given in Corollary 2.8.3 of \cite{vershynin2018high}, we have for any $t>0$,
\begin{equation}\label{eq:bern_sub_exp}
    P\left(\frac{1}{n}\sum_{i=1}^n a_{ij} a_{il} \geq t\right) \leq 2 e^{-\frac{nt^2}{2C_{\max}^2\kappa^4}}
\end{equation}
Hence, using the symmetry of the inner product and a union bound, we have
\begin{equation}\label{eq:union_sub_exp}
    P\left( \max_{l\neq j} \frac{1}{n}| \boldsymbol{a}_{.l}^\top\boldsymbol{a}_{.j}| \geq t\right) =P\left( \max_{l< j} \frac{1}{n}| \boldsymbol{a}_{.l}^\top\boldsymbol{a}_{.j}| \geq t\right) \leq 2\binom{p}{2} e^{-\frac{nt^2}{2C_{\max}^2\kappa^4}}
\end{equation}
Taking $t=2\sqrt{2}C_{\max}\kappa^2 \sqrt{\frac{\log p}{n}}$, we have,
\begin{equation}\label{eq:tail_nu}
    P\left(\nu \geq 2\sqrt{2}C_{\max}\kappa^2 \sqrt{\frac{\log p}{n}}\right) \leq \frac{1}{p^2}.
\end{equation}
This completes the proof.

\subsection{Convex conjugates}
The convex conjugate of a function $f(\boldsymbol{w})$ is defined as:
    \begin{equation}
    f^*(\boldsymbol{u}) = \sup_{\boldsymbol{w}} \big( \boldsymbol{u}^\top\boldsymbol{w} - f(\boldsymbol{w}) \big).
    \end{equation}
    The following result gives the convex conjugates of the functions needed in the proof of Theorem~\ref{thm:dual_javanmard_gen}.
\begin{lemma}\label{le:convex conjugates}

\begin{enumerate}
\item If $f(\boldsymbol{w}) = \frac{1}{n} \|\boldsymbol{w}\|_2^2$, then its convex conjugate is
$f^*(\boldsymbol{u}) = \frac{n}{4} \|\boldsymbol{u}\|_2^2$.

\item If $g_j$  is the indicator function of the convex set $\{ \boldsymbol{w} \in \mathbb{R}^p \mid \|\boldsymbol{w} - \boldsymbol{e_j}\|_\infty \leq \mu \}$, i.e., 
\[ g_j(\boldsymbol{w}) = \begin{cases} 0 & \textup{if $\|\boldsymbol{w} - \boldsymbol{e_j}\|_\infty \leq \mu$}\\\infty & \textup{otherwise,}\end{cases}
\]
then its convex conjugate is 
$g_j^*(\boldsymbol{u}) =u_j + \mu \|\boldsymbol{u}\|_1$.
\end{enumerate}
\end{lemma}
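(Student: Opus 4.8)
The plan is to compute each convex conjugate directly from the definition $h^*(\boldsymbol{u}) = \sup_{\boldsymbol{w}}\big(\boldsymbol{u}^\top\boldsymbol{w} - h(\boldsymbol{w})\big)$, since in both cases the supremum admits a closed form.

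For the first part, $f^*(\boldsymbol{u}) = \sup_{\boldsymbol{w}}\big(\boldsymbol{u}^\top\boldsymbol{w} - \tfrac{1}{n}\|\boldsymbol{w}\|_2^2\big)$ is the supremum of a strictly concave quadratic in $\boldsymbol{w}$, so I would set the gradient $\boldsymbol{u} - \tfrac{2}{n}\boldsymbol{w}$ equal to zero, obtaining the maximizer $\boldsymbol{w} = \tfrac{n}{2}\boldsymbol{u}$, and substitute back to get $\tfrac{n}{2}\|\boldsymbol{u}\|_2^2 - \tfrac{1}{n}\cdot\tfrac{n^2}{4}\|\boldsymbol{u}\|_2^2 = \tfrac{n}{4}\|\boldsymbol{u}\|_2^2$. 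Equivalently, one can invoke the standard identity that the conjugate of $\tfrac{a}{2}\|\cdot\|_2^2$ is $\tfrac{1}{2a}\|\cdot\|_2^2$, applied with $a = 2/n$.

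For the second part, since $g_j$ is the indicator of the $\ell_\infty$-ball of radius $\mu$ centered at $\boldsymbol{e}_j$, we have $g_j^*(\boldsymbol{u}) = \sup\{\boldsymbol{u}^\top\boldsymbol{w} : \|\boldsymbol{w} - \boldsymbol{e}_j\|_\infty \leq \mu\}$, the support function of that ball. I would change variables to $\boldsymbol{v} := \boldsymbol{w} - \boldsymbol{e}_j$, so that $\boldsymbol{u}^\top\boldsymbol{w} = u_j + \boldsymbol{u}^\top\boldsymbol{v}$ while the constraint becomes $\|\boldsymbol{v}\|_\infty \leq \mu$. Then $\sup_{\|\boldsymbol{v}\|_\infty \leq \mu}\boldsymbol{u}^\top\boldsymbol{v} = \mu\|\boldsymbol{u}\|_1$, so $g_j^*(\boldsymbol{u}) = u_j + \mu\|\boldsymbol{u}\|_1$.

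There is no genuine obstacle here: both computations are elementary. The only step warranting a word of justification is the identity $\sup_{\|\boldsymbol{v}\|_\infty \leq \mu}\boldsymbol{u}^\top\boldsymbol{v} = \mu\|\boldsymbol{u}\|_1$, which holds because the objective and the constraint separate across coordinates, and each term $u_i v_i$ is maximized over $|v_i| \leq \mu$ by taking $v_i = \mu\,\mathrm{sgn}(u_i)$, contributing $\mu|u_i|$; summing over $i$ gives $\mu\|\boldsymbol{u}\|_1$ (this is just $\ell_1$--$\ell_\infty$ duality).
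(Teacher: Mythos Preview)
Your proposal is correct and essentially mirrors the paper's own proof: for part~1 the paper writes $f(\boldsymbol{w})=\tfrac12\boldsymbol{w}^\top\boldsymbol{Q}\boldsymbol{w}$ with $\boldsymbol{Q}=\tfrac{2}{n}\boldsymbol{I_p}$ and cites the standard conjugate formula for positive definite quadratics (which is exactly your gradient computation), and for part~2 it performs the same coordinate-wise maximization you describe, just without first making the substitution $\boldsymbol{v}=\boldsymbol{w}-\boldsymbol{e}_j$.
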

\begin{proof}
\begin{enumerate}
    \item We can write $f(\boldsymbol{w}) = \frac{1}{2} \boldsymbol{w}^\top \boldsymbol{Q}\boldsymbol{w}$ where $\boldsymbol{Q} := \frac{2}{n}\boldsymbol{I_p}$ is positive definite (and has size $p \times p$). From Example 3.2.2 of \cite{boyd2004convex}, the 
    convex conjugate of a positive definite quadratic form is 
    \[ f^*(\boldsymbol{u}) = \frac{1}{2}\boldsymbol{u}^\top \boldsymbol{Q}^{-1}\boldsymbol{u} = \frac{1}{2}\boldsymbol{u}^\top\left(\frac{2}{n}\boldsymbol{I_p}\right)^{-1}\boldsymbol{u} = \frac{n}{4}\|\boldsymbol{u}\|_2^2.\]
    \item If $g_j$ is the indicator function of the set $C$, 
    the convex conjugate is given by
    \begin{equation}
    g_j^*(\boldsymbol{u}) = \sup_{\boldsymbol{w} \in C} \boldsymbol{u}^\top \boldsymbol{w},
    \end{equation}
    where $C = \{ \boldsymbol{w} \in \mathbb{R}^p \mid \|\boldsymbol{w} - \boldsymbol{e_j}\|_\infty \leq \mu \}$. This implies that $w_i \in [e_{ji}- \mu, e_{ji} + \mu], \forall i$. (Note that $e_{ij}=1$ if $i=j$ and $0$ otherwise.)
    To maximize $\boldsymbol{u}^\top\boldsymbol{w} = \sum_{i=1}^p u_i w_i$, the optimal $w_i$ can be chosen as
    \begin{equation}
    w_i = 
    \begin{cases}
        e_{ji} + \mu & \text{if } u_i \geq 0, \\
        e_{ji} - \mu & \text{if } u_i < 0.
    \end{cases}
    \end{equation}
    Substituting into $\boldsymbol{u}^\top\boldsymbol{w}$, we obtain $\boldsymbol{u}^\top\boldsymbol{w} = \sum_{i=1}^p u_i \big( e_{ji} + \mu \, \text{sign}(u_i) \big)$,
    where $\text{sign}(u_i)$ is the sign of $u_i$. Simplifying, we have $\boldsymbol{u}^\top\boldsymbol{w} = u_j + \mu \sum_{i=1}^p |u_i|$.
    Thus, we have 
\begin{equation}    
g_j^*(\boldsymbol{u}) =u_j + \mu \|\boldsymbol{u}\|_1.
\end{equation}
\end{enumerate}
\end{proof}
\bibliographystyle{plain}
\bibliography{fastdebiasing_arxiv}
\end{document}